%% file: iclr2027_conference.tex
\documentclass{article} % For LaTeX2e
\usepackage{iclr2027_conference,times}

\input{math_commands}

\usepackage{hyperref}
\usepackage{url}
\usepackage[capitalise,nameinlink]{cleveref}
\usepackage{amsmath, amsfonts, amssymb, amsthm}
\usepackage{graphicx}
\usepackage{booktabs}
\usepackage{multirow}
\usepackage{multicol}
\usepackage{wrapfig}

\newtheorem{lemma}{Lemma}
\newtheorem{proposition}{Proposition}

\newcommand{\introparagraph}[1]{\textbf{#1.}} % define own new subsection type: noindent, bold (textsc)

\newcommand{\ckale}{\textsc{CKALE}}
\newcommand{\mime}{\textsc{MIME}}

\title{Measuring Learned Equivariance: \\ An Information Theoretic Approach}

\author{Claire Schlesinger, Wolfgang Gatterbauer \& Robin Walters \\
Khoury College of Computer Sciences\\
Northeastern University\\
Boston, MA 02115, USA \\
\texttt{\{schlesinger.e,w.gatterbauer,r.walters\}@northeastern.edu} \\
}

\iclrfinalcopy % Uncomment for camera-ready version, but NOT for submission.
\begin{document}

\maketitle

\begin{abstract}
% \wolf{3-SENTENCE SUMMARY JUST FOR OURSELVES:
% We want to detect whether a network has learned equivariance without first knowing how its hidden features should transform. 
% NMI provides the theoretical criterion, while CKA gives a scalable (``geometric'') practical test for an important class of feature transformations. 
% This makes it possible to study when and where equivariance emerges in ordinary neural networks without first solving the harder problem of recovering their internal transformation rules.}

Learning symmetries through data augmentation offers an alternative to architectural equivariance, but does not guarantee that a network respects those symmetries.
Measuring learned equivariance is especially difficult for intermediate representations, whose group actions are not specified.
In a finite discrete setting, we show that equivariance requires representations of original and transformed inputs to determine one another exactly, yielding a test based on normalized mutual information (NMI).
For high-dimensional activations, we use centered kernel alignment (CKA) as a scalable test, with an exact guarantee for orthogonal feature transformations.
Across architectures and tasks, augmentation usually increases transformation consistency, showing that learned equivariance can be probed directly from representations without recovering the hidden transformation rules that relate them.
\end{abstract}

%------------------------------------------------------------------------------
\section{Introduction}

Equivariance is a powerful inductive bias in fields such as biology, materials science, and image processing, letting networks produce equivalent outputs despite non-canonical inputs.
However, recent work shows that at equal computational cost, non-equivariant networks can outperform structurally equivariant ones when they instead learn equivariance from a variety of transformed inputs \citep{Abramson2024alphafold3, Chai-1-Technical-Report, wohlwend2024boltz1}.
Such networks tend toward equivariance during training but are not constrained to it, so they may not be perfectly equivariant.
\citet{gruver2023liederivativelearnedequivariance} and \citet{moskalev2023ongenuineinvariance} found that models may have generalization error related to unlearned equivariance for inputs further from the training distribution. 
A practical measure of this error is therefore needed to check whether such models generalize correctly across inputs.

Prior work quantifying learned equivariance in model outputs and latent features assumes the group action on input, hidden, and output features is known and specified a priori \citep{gruver2023liederivativelearnedequivariance}.
This limits these metrics, since in practice the group action may be unknown
(especially on the latent space) 
so checking for equivariance requires testing different candidate actions.

We address this with an information-theoretic approach.
Maximal normalized mutual information between a layer's representations of an input and its transformed copy implies a deterministic, invertible map between them, as equivariance requires, without knowing what that map is.
Because mutual information cannot be estimated reliably for continuous activations, we use centered kernel alignment (CKA) \citep{kornblith2019cka} as a proxy, and show that it is an exact test when the feature transformation is orthogonal.

% We address this with an information-theoretic approach based on \emph{centered kernel alignment} (CKA) \citep{kornblith2019cka}.
% Mutual information (MI) measures the dependence between signals passed through a noisy channel such as a neural network, but it cannot be computed without an exact probability distribution.
% \wolf{suggest to restructure this paragraph:
% - first idea to use MI.
% - but complication: can't be measured reliably
% - solution: let's use CKA which is a proxy of MI 
% - not perfect but still works as we show in theory and practice}
% We therefore use CKA, which measures statistical dependence as a proxy for maximum mean discrepancy (MMD) 
% \wolf{what is the role of MMD here. It is mentioned but never explain why this is important.
% Is it that CKA was developed for approximating MMD originally, but then discovered to also help with approximating MI?}
% and, in certain neural-network settings, for mutual information \citep{ni2023learningsparseneuralnetworks}.
% Comparing a layer's outputs on the original and transformed inputs then lets us measure equivariance.

Our contributions are:
\begin{itemize}
    \item We show how learned equivariance can be tested without specifying how hidden features should transform: exact equivariance requires NMI to be maximal between original and transformed representations.

    \item We introduce CKA as a scalable geometric test for high-dimensional activations (comparing their pairwise similarity structure), show that exact equivariance yields maximal CKA under orthogonal feature transformations, and characterize where it departs from the information-theoretic criterion.

    \item Across architectures, tasks, and symmetry groups, we show that data augmentation generally increases learned equivariance in standard networks, and identify where existing equivariance measures succeed or fail.
\end{itemize}

\begin{figure}
    \centering
    \includegraphics[width=\linewidth]{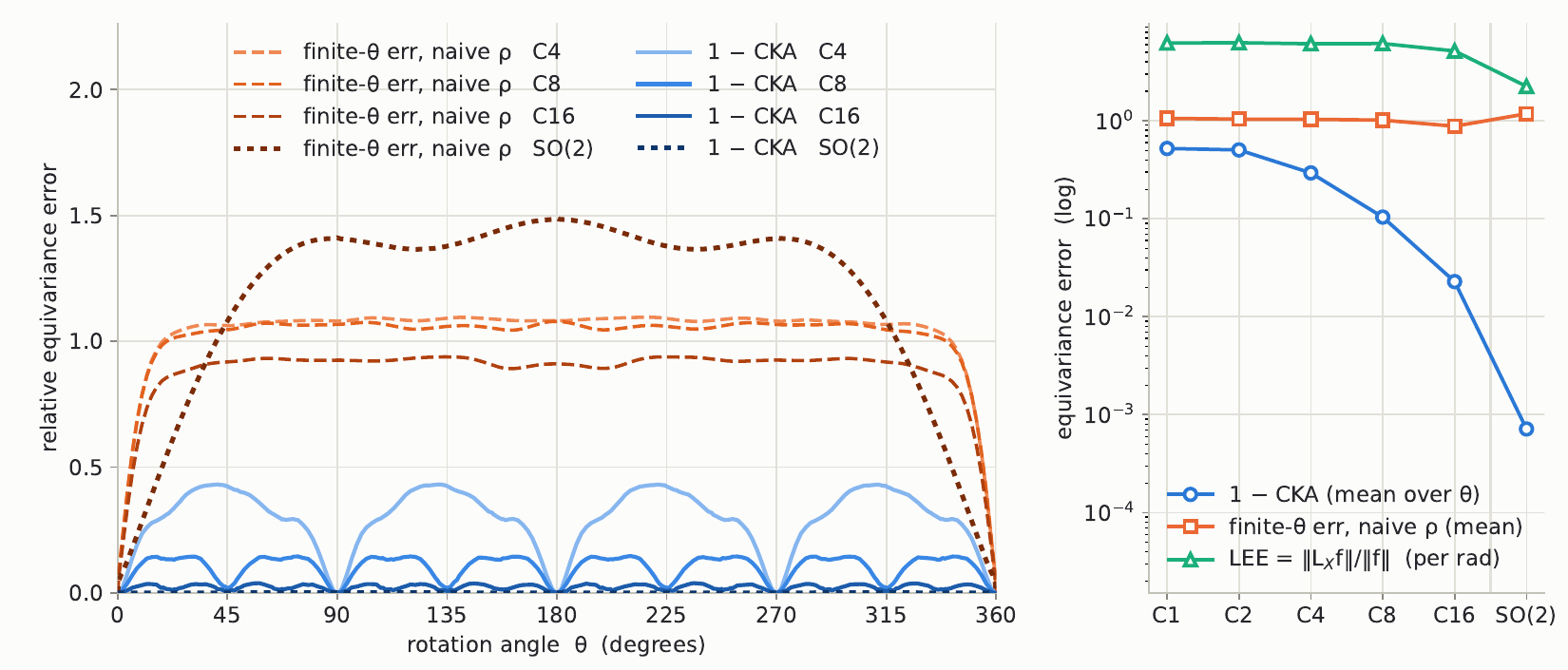}
    \caption{Comparison of equivariance error between $1-\text{CKA}$ and the local equivariance error (LEE) using the assumptions of \citet{gruver2023liederivativelearnedequivariance}. Each model is an exactly equivariant model made with ESCNN \citep{escnn} and equivariance error is computed at every half a degree in the range $[0, 359.5]$. CKA is able to properly detect the zeros at the right intervals where the LEE is unable to detect equivariance for any model, even the SO(2).}
    \label{fig:lee-vs-cka}
\end{figure}

%------------------------------------------------------------------------------
\section{Related Work}

\introparagraph{Measuring and discovering equivariance}
\citet{LencVedaldi2015equivarianceandequivalence} measure equivariance
by fitting feature-space maps for known image transformations,
while \citet{gruver2023liederivativelearnedequivariance} use Lie
derivatives to measure local deviations from specified input and
output actions.
% We instead compare representations directly, without fitting a
% feature transformation or specifying the action on hidden features.
% 
LieGG \citep{moskalev2022liegg} recovers generators of learned
invariances from a gradient-based matrix, while
LieGAN \citep{yang2023generative} discovers distribution-preserving
transformations through adversarial training.
% These methods seek to identify symmetries. 
% We instead take the input
% transformations as given and assess the resulting representations,
% without recovering generators or requiring the features themselves
% to remain invariant.
Our NMI and CKA measurements take the input transformations as given
and compare activations on original and transformed inputs.
They require neither specifying nor fitting the corresponding
transformations of hidden features.
SEIS \citep{lin2026seissubspacebasedequivarianceinvariance} reports
separate equivariance and invariance scores without fitting feature
transformations; we show its equivariance score drops for steerable
networks whose rotations also mix channels.

\introparagraph{Information-theoretic symmetry}
In a grid-world model of shared concept formation,
\citet{moller2023concepts} measure equivariance using mutual
information between concepts assigned to original and transformed
world states.
\citet{charvin2023equivariances} connect equivariance of discrete
probabilistic mappings to optimal joint input-output compression
that preserves mutual information.
Our focus is measuring existing neural representations under
given input transformations, rather than finding an optimal
compression.
For finite discrete representations, we express the necessary
condition through NMI: original and transformed representations
must determine one another exactly.
We then use CKA to compare high-dimensional activations and
investigate where it agrees with or departs from this condition.

\introparagraph{Representation similarity}
SVCCA \citep{raghu2017svcca} compares correlated feature combinations
after discarding low-variance directions, while CKA
\citep{kornblith2019cka} compares patterns of similarity between
examples.
We use linear CKA across layers and during training because it
requires neither selecting feature directions nor fitting an
alignment.
We also evaluate SVCCA and Procrustes error.
Procrustes measures the mismatch remaining after the best rotation
or reflection in feature space \citep{williams2021shape}.
We show that exact equivariance under orthogonal feature
transformations yields maximal CKA, connecting its established
invariance properties to a necessary condition for equivariance.

Similarity scores need not track model behavior
\citep{ding2021grounding,davari2022reliability}, and finite-sample
bias can inflate CKA \citep{murphy2024correcting}.
For equivariance, low CKA need not mean information was lost, and
high CKA does not establish that a particular transformation rule
was followed.
We therefore use CKA to track transformation consistency across
layers and training, with shuffled-pair nulls as a chance baseline
and SVCCA and Procrustes as complementary checks.

\introparagraph{Learning symmetry from data}
Prior work studies how data, augmentation, and architecture affect
learned symmetry \citep{bouchacourt2021grounding,
Bruintjes2023whataffectslearnedequivariance}.
\citet{wang2023mnistfont} analyze how mismatches between symmetry
assumptions and the task or data affect prediction error.
Whereas \citet{Bruintjes2023whataffectslearnedequivariance}
match individual feature maps across channels, we compare
whole representations with CKA.
We use this comparison to examine which layers change with
augmentation and how those changes develop during training,
across image-recognition, image-to-image, and point-cloud tasks.

\section{Background}

We provide background on information theory \citep{shannon1948mathematical}, equivariance, and CKA \citep{kornblith2019cka}, including the Hilbert--Schmidt Independence Criterion (HSIC) \citep{gretton2007hsic} on which CKA is built.

%------------------------------------------------------------------------------
\subsection{Information Theory}

For a discrete random variable $X \sim p(x)$, the Shannon entropy $H(X) = -\sum_x p(x) \log p(x)$ quantifies the information gained by observing $X$.
Given a second variable $Y \sim p(y)$, the conditional entropy is $H(X|Y) = -\sum_y \sum_x p(x,y)\log p(x|y)$.
% \wolf{information gained by observing $X$ if we already know $Y$. if little space, you may want to leave away the def of entropy but give conditional entropy, because that is more related to what comes later.}
% \wolf{I would not say ``we define'' because that is the definition. 
% Maybe to shortly explain for the reader what MI captures.}
The mutual information between the pair of random variables is determined by:
\begin{equation}
    I(X;Y) = H(X) - H(X|Y)
\end{equation}
Mutual information is $0$ if and only if $X$ and $Y$ are independent, i.e., observing $Y$ yields no information about $X$.

% \wolf{same here: better not we define. But ``Normalized MI'' is a canonical way that is restricted to the interval [0,1] (if that is true).}
As mutual information is dependent on the total available information, we utilize normalized mutual information (NMI):
\begin{equation}
    \mathrm{NMI}(X,Y) = \frac{2I(X;Y)}{H(X)+H(Y)}
\end{equation}

%------------------------------------------------------------------------------
\subsection{Equivariance}

A function $f\colon \mathcal{X} \rightarrow \mathcal{Y}$ is $G$-equivariant for a group $G$ 
if $f(\rho_{x}(g)x) = \rho_y(g)f(x)$ for all $g \in G$ and $x \in \mathcal{X}$, where $\rho_x$ and $\rho_y$ are the group actions on the input and output.
Invariance is the special case $\rho_y(g) = \mathrm{id}$, giving $f(\rho_x(g)x) = f(x)$.
As we work with images, we focus on the $C_4$ group of $90^\circ$ rotations, which preserve an image's information.

There are a few ways to measure equivariance error, but one of the simplest ways is to compute the error between the output of the function on the transformed input and the output of the function transformed by the group action $\mathrm{EE} = ||f(\rho_{in}(g)x) - \rho_{out}(g)f(x)||_2^2$. \citet{gruver2023liederivativelearnedequivariance} utilizes equivariance error to measure the per layer equivariance of a model given a infinitesimal transformation of the input. 
In practice, they take $\rho_{out}(g)$ to be trivial for scalar data or a purely spatial rotation for image-like data, which is wrong for steerable equivariant networks \citep{escnn}.

%------------------------------------------------------------------------------
\subsection{Hilbert-Schmidt Independence Criterion}

Given two datasets $\mathbf{X} \in \mathbb{R}^{n \times d_\mathbf{X}}$ and $\mathbf{Y} \in \mathbb{R}^{n \times d_\mathbf{Y}}$ and feature maps $\phi$ and $\theta$ into a Hilbert space, we form kernels $K_\mathbf{X}(x, x') = \langle \phi(x), \phi(x') \rangle$ and $K_\mathbf{Y}(y, y') = \langle \theta(y), \theta(y') \rangle$ that capture nonlinear relationships.
Note that we obtain a linear kernel by choosing $\theta$ and $\phi$ to be the identity function and we obtain an RBF kernel by setting $K_\mathbf{X}$ to be $\exp(-||x-x'||^2/2\sigma^2)$.
With centering matrix $H = \mathbf{I}_n - \frac{1}{n}\mathbf{1}_n\mathbf{1}_n^\top$, the Hilbert-Schmidt Independence Criterion (HSIC) is defined as:
\begin{equation}
    \mathrm{HSIC}(K_\mathbf{X}, K_\mathbf{Y}) = \langle H K_\mathbf{X} H, H K_\mathbf{Y} H \rangle_{F}
\end{equation}

%------------------------------------------------------------------------------
\subsection{Centered Kernel Alignment}

Centered Kernel Alignment (CKA) normalizes HSIC to the range $[0,1]$, removing dependence on the scale of the original data:
\begin{equation}
    \mathrm{CKA}(K_\mathbf{X}, K_\mathbf{Y}) = \frac{\mathrm{HSIC}(K_\mathbf{X},K_\mathbf{Y})}{\sqrt{\mathrm{HSIC}(K_\mathbf{X},K_\mathbf{X)}\mathrm{HSIC}(K_\mathbf{Y},K_\mathbf{Y})}}
\end{equation}
A value of $0$ indicates no dependence captured by the kernel and $1$ a perfect statistical relationship between $\mathbf{X}$ and $\mathbf{Y}$.

%------------------------------------------------------------------------------
\section{Methodology}

\begin{wrapfigure}{r}{0.6\linewidth}
    \centering
    \includegraphics[width=\linewidth]{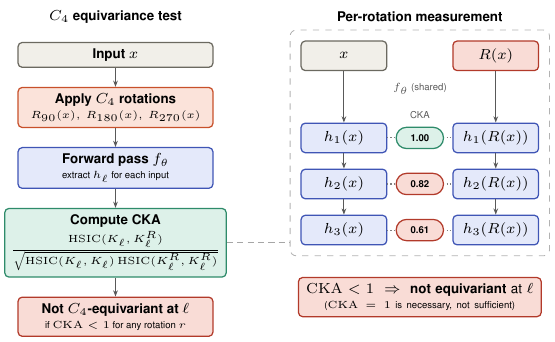}
    \caption{Per-layer \ckale{} test for $C_4$: CKA $<1$ at any rotation rules out exact equivariance at that layer.
    % \wolf{questions about this figure: the right side shows the layers. What is equivariant at I, is that the index of the layer?
    % Now what is idea of the left column? It seems the workflow}
    } %In the example shown above, we use the $C_4$ group.}
    \label{fig:system-diagram}
\end{wrapfigure}

We present \mime{}, a way to utilize mutual information to measure equivariance in discrete cases, and \ckale{}\footnote{Pronounce like kale}, a way to utilize CKA to measure equivariance in real models. We show theoretically that both can operate as an equivariance measure.

%------------------------------------------------------------------------------
\subsection{\mime{}: Mutual Information Measure of Equivariance}

Let $f_\theta$ be a neural network and denote the truncated mapping to the output of layer $\ell$ as $h_\ell$.
For inputs $X$ and group $G$, equivariance cannot be ruled out at layer $\ell$ if for all $g \in G$, $\mathrm{NMI}(h_\ell(X), h_\ell(\rho(g)X)) = 1$.
That is, the layer's outputs on the original and group-transformed data share all of their information, so a deterministic map relates $h_\ell(X)$ and $h_\ell(\rho(g)X)$ in both directions, as equivariance requires.
We call this mutual information measure of equivariance \mime{}.

We show that NMI being 1 is a necessary but not sufficient condition for a function to be equivariant, which justifies its use as a measure.

\begin{proposition}
    Let $g \in G$. If $\mathrm{NMI}(f(X), f(\rho(g)X)) < 1$, then $f$ is not $G$-equivariant at $g$.
\end{proposition}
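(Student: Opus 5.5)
We argue by contrapositive: assume $f$ is $G$-equivariant at $g$ and show $\mathrm{NMI}(f(X), f(\rho(g)X)) = 1$. We work in the finite discrete setting of \mime{}, so $X$ is a random variable on a finite input set, and $Y = f(X)$ and $Y' = f(\rho(g)X)$ are discrete random variables with finite entropy. We also assume $H(Y)+H(Y')>0$ so that NMI is defined. In the degenerate case where both entropies vanish, one adopts the convention $\mathrm{NMI}=1$.

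The first step uses equivariance directly. It gives $Y' = f(\rho(g)X) = \rho_y(g) f(X) = \rho_y(g) Y$. Since $\rho_y$ is a group action, $\rho_y(g)$ is a bijection on the output space, with inverse $\rho_y(g^{-1})$. Hence $Y'$ is a deterministic function of $Y$, and $Y = \rho_y(g^{-1})Y'$ is a deterministic function of $Y'$.

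The second step converts these two deterministic relations into entropy identities. If one variable is a deterministic function of another, the corresponding conditional entropy is zero, so $H(Y'|Y)=0$ and $H(Y|Y')=0$. Then
\[
I(Y;Y') = H(Y) - H(Y|Y') = H(Y),
\]
and by symmetry of mutual information also $I(Y;Y') = H(Y')$. Therefore $H(Y)=H(Y')=I(Y;Y')$. Alternatively, one can note that entropy is invariant under bijections of the support, which gives $H(Y)=H(Y')$ directly.

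The third step is the computation
\[
\mathrm{NMI}(Y,Y') = \frac{2I(Y;Y')}{H(Y)+H(Y')} = \frac{2H(Y)}{2H(Y)} = 1,
\]
which contradicts the hypothesis $\mathrm{NMI}<1$. So $f$ is not equivariant at $g$.

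The argument is elementary. The main point needing care is that the output action $\rho_y(g)$ must be invertible, which holds because it is a group action, together with the degenerate zero-entropy case. It is also worth remarking why the converse fails: NMI equal to $1$ only says that some bijection links $Y$ and $Y'$, not that this bijection is $\rho_y(g)$ or that the bijections for different $g$ are compatible as a group action. This is why the condition is necessary but not sufficient.
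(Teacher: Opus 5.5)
Your proof is correct and follows the paper's own argument. You argue by contrapositive, use equivariance and the invertibility of $\rho_y(g)$ to get $H(Y\mid Y') = H(Y'\mid Y) = 0$, and conclude $\mathrm{NMI}=1$; the paper gets the last step by citing its appendix lemma ($\mathrm{NMI}=1$ iff both conditional entropies vanish), which is the computation you do inline, and you also handle the zero-entropy case where NMI is undefined, which the paper leaves implicit.
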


\begin{proof}
    Proof by contrapositive.
    Let $X$ be a random variable taking values in $\mathcal{X}$ and let $f\colon \mathcal{X} \rightarrow \mathcal{Y}$ be $G$-equivariant, so that $f(\rho_x(g)x) = \rho_y(g)f(x)$ for all $x \in \mathcal{X}$. 
    Define the random variables $Y := f(X)$, and $Y' := f(\rho_x(g)X)$.
    By equivariance, $Y' = \rho_y(g)Y$, and since $\rho_y(g)$ is a group action and hence invertible, $Y = \rho_y(g)^{-1}Y'$. Each of $Y$ and $Y'$ is therefore a deterministic function of the other, so $H(Y \mid Y') = H(Y' \mid Y) = 0$, and by \Cref{lemma:mutual-determinism},
    $\mathrm{NMI}(Y, Y') = \mathrm{NMI}(f(X), f(\rho_x(g)X)) = 1$.
\end{proof}

%------------------------------------------------------------------------------
\subsection{\ckale{}: CKA for Learned Equivariance}

Mutual information requires the full data distribution, which is intractable for a layer's continuous, high-dimensional outputs: recovering the density on $\mathbb{R}^d$ would require evaluating every possible input. 
Instead, to measure learned equivariance in real models, we apply CKA to each layer's output and its rotated copy to test whether the layer acts in an equivariant manner (\Cref{fig:system-diagram}), a method we call \ckale{}: CKA for Learned Equivariance.
We measure $\mathrm{CKA}(h_\ell(x), h_\ell(\rho_{in}(g)x))$ for all $\ell$ in the network $f_\theta$.
For any discrete group, we test every possible input operation when measuring equivariance, but for infinite groups such as $SO(2)$ we subsample to 16 samples to test for equivariance.
We show that similar to NMI, CKA being 1 is a necessary but not sufficient condition for a function to be equivariant which justifies its use as a measure of equivariance.

\begin{proposition}
    Let $g \in G$. Assume $\rho_\mathbf{Y}(g)$ is orthogonal. Also assume orthogonally invariant kernels $K$ that use the same feature map. (For example, linear or RBF kernels).
    If $\mathrm{CKA}(K_{f(\mathbf{X})}, K_{f(\rho_\mathbf{X}(g)\mathbf{X})}) < 1$,  
    the function $f$ is not $G$-equivariant at $g$. 
\end{proposition}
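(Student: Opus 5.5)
We argue by contrapositive, mirroring the proof of the previous proposition. Suppose $f$ is $G$-equivariant at $g$, so that $f(\rho_\mathbf{X}(g)x) = \rho_\mathbf{Y}(g) f(x)$ for every sample $x$. Write $\mathbf{Y} = f(\mathbf{X}) \in \mathbb{R}^{n\times d}$, whose rows are $y_i = f(x_i)$, and write $Q := \rho_\mathbf{Y}(g)$, which is orthogonal by assumption. Equivariance applied row by row gives $\mathbf{Y}' := f(\rho_\mathbf{X}(g)\mathbf{X}) = \mathbf{Y}Q^\top$, that is, $y_i' = Q y_i$. The aim is to show that the two Gram matrices coincide, $K_{\mathbf{Y}'} = K_{\mathbf{Y}}$. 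Once this holds, the CKA is automatically $1$, which contradicts the hypothesis $\mathrm{CKA}<1$.

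\textbf{Step 1 (kernel invariance).} Both kernels use the same feature map and are orthogonally invariant, meaning $K(Qy, Qy') = K(y,y')$. This gives $(K_{\mathbf{Y}'})_{ij} = K(Qy_i, Qy_j) = K(y_i,y_j) = (K_\mathbf{Y})_{ij}$. For the two named examples we check the invariance directly. For the linear kernel, $\langle Qy, Qy'\rangle = y^\top Q^\top Q y' = y^\top y'$. For the RBF kernel, $\|Qy-Qy'\|^2 = \|Q(y-y')\|^2 = \|y-y'\|^2$, because $Q$ preserves norms. For a general orthogonally invariant kernel the property holds by definition.

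\textbf{Step 2 (CKA of identical kernels).} Let $K := K_\mathbf{Y} = K_{\mathbf{Y}'}$. Then $\mathrm{HSIC}(K,K) = \|HKH\|_F^2$, and
\[
\mathrm{CKA}(K,K) = \frac{\|HKH\|_F^2}{\sqrt{\|HKH\|_F^2\,\|HKH\|_F^2}} = 1.
\]
Hence $\mathrm{CKA}(K_{f(\mathbf{X})}, K_{f(\rho_\mathbf{X}(g)\mathbf{X})}) = 1$. By contraposition, if the CKA is below $1$, then $f$ is not equivariant at $g$.

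The main obstacle is not algebraic. It is the degenerate case $HKH = 0$, for instance when all representations are identical, where CKA is $0/0$ and undefined. We would handle this by stating that the proposition applies only when CKA is defined, i.e. $HK_\mathbf{Y}H \neq 0$. Since $K_{\mathbf{Y}'} = K_\mathbf{Y}$, this condition is the same for both arguments, so no asymmetric case can occur. A further point to make explicit is that the hypothesis ``orthogonally invariant'' must be read as $K(Qy,Qy') = K(y,y')$ for the specific orthogonal $Q=\rho_\mathbf{Y}(g)$. Kernels that are not invariant in this sense, such as polynomial kernels with a non-isotropic scaling, are excluded by that assumption.
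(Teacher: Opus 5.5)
Your proof is correct and follows the paper's route: a contrapositive argument in which orthogonal invariance of the shared kernel gives $K_{\mathbf{Y}'} = K_{\mathbf{Y}}$, and identical Gram matrices force $\mathrm{CKA}=1$. The only difference is cosmetic: you compute $\mathrm{CKA}(K,K)=1$ directly and state the nondegeneracy condition $HKH \neq 0$, whereas the paper invokes its lemma on proportional centered Gram matrices (with $c=1$), whose nondegeneracy hypothesis plays the same role.
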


\begin{proof}
    Proof by contrapositive.
    Assume $f\colon \mathbf{X} \rightarrow \mathbf{Y}$ is $G$-equivariant and $\mathbf{X}$ and $\mathbf{Y}$ are sets. 
    Let $\mathbf{Y} := f(\mathbf{X})$ and $\mathbf{Y}' := f(\rho_\mathbf{X}(g)\mathbf{X})$. 
    Since $f$ is $G$-equivariant, there exists an invertible orthogonal map $\rho_\mathbf{Y}(g)$ such that $\mathbf{Y}' =\rho_\mathbf{Y}(g)\mathbf{Y}$. 
    Following our two assumptions, $\rho_\mathbf{Y}(g)$ is orthogonal and our kernel functions $K_\mathbf{Y}$ and $K_{\mathbf{Y}'}$ are orthogonally invariant.
    Therefore $K_\mathbf{Y} = \langle \theta(y), \theta(y') \rangle = \langle \theta(\rho_y(g)y), \theta(\rho_y(g)y') \rangle = K_{\mathbf{Y}'}$.
    By \cref{lemma:extremal-cka-proprotional-gram-matrices}, $\mathrm{CKA}(K_\mathbf{Y}, K_{\mathbf{Y}'}) = 1$.
\end{proof}

%------------------------------------------------------------------------------
\section{Experiments}

We first validate MI as an equivariance measure and compare it with CKA, then show where MI breaks down, and finally compare \ckale{} with existing measures on real models.

%------------------------------------------------------------------------------
\subsection{Mutual Information as an Equivariance Measure}

\begin{table}[t]
    \centering
    \caption{NMI between filtered original and rotated images for the Sobel-X filter (not $C_4$-equivariant) and the Laplacian ($C_4$-equivariant) on a restricted, discretized image set. The Laplacian stays at $1$. Sobel-X is $C_2$-equivariant and anti-commutes, so it reads $1$ at $180^\circ$; a metric assuming the output action matches the input action would report a large error here, but NMI detects the equivariance without being told the action is a sign flip.}
        \footnotesize
    \setlength{\tabcolsep}{2pt} % default is 6pt
    \begin{tabular}{c|cc}
         & \multicolumn{2}{c}{Filter} \\
        Rotation & Sobel-X & Laplacian \\
        \hline
        $90^\circ$ & \textbf{0.1954} & 1.0000 \\
        $180^\circ$ & 1.0000 & 1.0000 \\
        $270^\circ$ & \textbf{0.1954} & 1.0000 \\
        \hline
    \end{tabular}
    \label{tab:mi-proof}
\end{table}

To show that mutual information can measure equivariance, we use a toy dataset of $4 \times 4$ binary images taking values in $\{-1, 1\}$.
Only $2^{16}=65536$ such images exist, so we run every possible image through the chosen filter and use the output rates to compute the probabilities and normalized mutual information. Outputs are binarized at a shared median and coarse-grained by summing each $2\times2$ quadrant. This commutes with the group action, since a $90^\circ$ rotation permutes the quadrants and each $2\times2$ sum is rotation-invariant; \Cref{sec:granularity} explains why it is needed.

As \Cref{tab:mi-proof} shows, NMI falls below $1$ only for Sobel-X at $90^\circ$ and $270^\circ$, since it is only $C_2$-equivariant.

\begin{table}[t]
    \centering
    \caption{CKA versus MI on a discrete set of data points. We apply the Laplacian to the input image, then either keep the output unchanged, scramble the alphabet (relabel outputs while keeping points with the same label together), or merge the alphabet (map multiple distinct outputs to the same value). CKA reads the lossless relabeling as non-equivariant, while MI correctly detects equivariance. A high null indicates the metric may be less informative due to a large alphabet.}
    \label{tab:mi-standard}
    \footnotesize
    \setlength{\tabcolsep}{2pt} % default is 6pt
    \begin{tabular}{llcccccc}
        \toprule
        & & \multicolumn{2}{c}{linear CKA} & \multicolumn{3}{c}{exact MI} \\
        \cmidrule(lr){3-4}\cmidrule(lr){5-7}
        readout & nature & value & null & bits & MI$/H$ & null$/H$ \\
        \midrule
        identity & reference & 1.0000 & 0.0004 & 7.3205 & 1.000 & 0.111 \\
        scramble $p{=}1.00$ & bijective, \textbf{lossless} & 0.0376 & 0.0009 & \textbf{7.3205} & \textbf{1.000} & 0.111 \\
        merge $q{=}0.10$ & \textbf{90\% of alphabet destroyed} & \textbf{0.0431} & 0.0006 & \textbf{3.1574} & \textbf{0.606} & 0.005 \\
        \midrule
        \multicolumn{2}{l}{Sobel-x control (not C4-equivariant)} & 0.0021 & 0.0004 & 1.2676 & 0.195 & 0.031 \\
        \bottomrule
    \end{tabular}
\end{table}

\Cref{tab:mi-standard} shows that relabeling the alphabet, a lossless bijection, drives CKA toward its null because CKA depends on geometry, while MI correctly stays at $1.000$; both correctly flag the lossy merge.
The null column shuffles the pairing between outputs. MI's high null reflects the large alphabet and makes it less trustworthy here, so CKA remains the more practical measure despite missing this form of equivariance.

%----------------------------------------------------------------
\subsection{CKA for Symmetry Probing}

\begin{table}[t]
    \centering
    \caption{Measuring equivariance after a Laplace operation is applied to the discretized images. The transformation is applied so we have a more interesting probability distribution under transformations. Under certain operations applied to the output we see how CKA, MI, and SVCCA either fails or succeeds to detect equivariance. SVCCA is able to detect one more group than CKA, but lacks an unbiased estimator, so we chose to utilize CKA.}
    \label{tab:mi-ladder}
    \footnotesize
    \setlength{\tabcolsep}{2pt} % default is 6pt
    \begin{tabular}{llcccccc}
        \toprule
        readout & nature & MI & null & MI$/H$ & linCKA & rbfCKA & SVCCA \\
        \midrule
        identity          & reference & 7.3205 & 0.8113 & 1.000 & 1.0000 & 1.0000 & 1.0000 \\
        bit-permute       & bijective, isometry & 7.3205 & 0.8113 & 1.000 & 1.0000 & 1.0000 & 1.0000 \\
        \midrule
        ill-cond.\ linear & bijective, linear, \emph{not} isometry & 7.3205 & 0.8113 & 1.000 & 0.3584 & 0.2280 & \textbf{1.0000} \\
        alphabet-permute  & bijective, \emph{not} linear & 7.3205 & 0.8113 & 1.000 & 0.1113 & 0.1543 & \textbf{0.2597} \\
        lossy-half        & \textbf{many-to-one, destroyed} & \textbf{1.8643} & 0.0065 & \textbf{0.505} & 0.4835 & 0.4528 & 0.5042 \\
        \bottomrule
    \end{tabular}
\end{table}

Mutual information is invariant to any bijective reparameterization of the features and falls only when information is genuinely destroyed. Linear CKA is invariant to a strictly smaller set of orthogonal transformations and isotropic scaling. Within that set the two agree exactly: on the identity and bit-permutation readouts of \Cref{tab:mi-ladder}, both of which are orthogonal, CKA reads $1.0000$ and MI reads $7.3205$ bits at MI$/H = 1.000$. Outside it, CKA collapses while MI does not.

Every representation of a compact group is equivalent to an orthogonal one, so the $\rho(g)$ of an architecturally equivariant network can be taken orthogonal, which is CKA's invariance class.
We assume this holds approximately for networks trained with augmentation. Because each comparison is between augmented and baseline models of identical architecture, systematic departures largely cancel.
SVCCA is invariant to all invertible linear maps and survives the ill-conditioned readout, but canonical correlations are biased upward as feature dimension approaches sample count, which is the regime of our batch-size-64 sweep. HSIC admits an unbiased estimator \citep{song2012feature}, which we use to compute CKA throughout \citep{nguyen2021do}.

\begin{table}[t]
    \centering
    \caption{Comparison between different methods on detecting equivariance given an identity and an energy readout. All methods except LEE succeed on the identity readout. Under the energy readout, only mutual information stays exact, while CKA still clearly separates the equivariant kernel from the control.}
   % \caption{The energy readout $\sigma(v) = [Mv \mid (Nv)^2]$ and what each family
   %  reports. Nulls are shuffled controls over 20 permutations, with the fit redone
   %  on the shuffled data for the fitted maps. Separation is the ratio between the
   %  equivariant kernel and the non-equivariant control, oriented so that larger is
   %  better; below $1$ means the metric ranks the control higher. Panel~(a) is the
   %  reference: when $\rho(g)$ is linear the fitted maps are exact. The readout is
   %  injective and MI$/H = 1.000$ in both panels, so nothing is lost and any reported
   %  error belongs to the metric rather than the representation. Exhaustive $2^{16}$
   %  space, quadrant coarse-grained; $H = 7.3205$ bits, control $6.4871$.}
    \label{tab:energy}
    \footnotesize
    \setlength{\tabcolsep}{2pt} % default is 6pt
    \begin{tabular}{llccccc}
        \toprule
        & & \multicolumn{2}{c}{Laplace} & \multicolumn{2}{c}{Sobel-X} & \\
        \cmidrule(lr){3-4}\cmidrule(lr){5-6}
        & metric & value & null & value & null & separation \\
        \midrule
        \multicolumn{7}{l}{\textbf{(a) identity readout} --- reference} \\
        \midrule
        assumes $\rho$ & LEE, $\rho=I$ $\downarrow$ & 1.4933 & 1.4145 & 1.3959 & 1.4150 & \textbf{0.93}$\times$ \\
        \midrule
        \multirow{2}{*}{fits $\rho$}
         & affine fit $\downarrow$ & \textbf{0.0000} & 1.0001 & 1.0836 & 1.0001 & $\infty$ \\
         & Procrustes $\downarrow$ & \textbf{0.0000} & 1.4874 & 1.4884 & 1.4641 & $\infty$ \\
        \midrule
        \multirow{3}{*}{no $\rho$}
         & linear CKA $\uparrow$ & 1.0000 & 0.0020 & 0.0023 & 0.0007 & 435$\times$ \\
         & RBF CKA $\uparrow$ & 1.0000 & 0.0030 & 0.0044 & 0.0014 & 227$\times$ \\
         & SVCCA $\uparrow$ & 1.0000 & 0.0387 & 0.0936 & 0.0283 & 10.7$\times$ \\
        \midrule
         & MI, exact (bits) $\uparrow$ & 7.3205 & 0.8100 & 1.2676 & 0.2027 & 5.8$\times$ \\
         & \quad MI$/H$ \ (null$/H$) & 1.000 & 0.111 & 0.195 & 0.031 & --- \\
        \midrule
        \midrule
        \multicolumn{7}{l}{\textbf{(b) \textsc{energy} readout} $\sigma(v)=[Mv \mid (Nv)^2]$} \\
        \midrule
        assumes $\rho$ & LEE, $\rho=I$ $\downarrow$ & 1.3640 & 1.4140 & 1.3958 & 1.4139 & \textbf{1.02}$\times$ \\
        \midrule
        \multirow{2}{*}{fits $\rho$}
         & affine fit $\downarrow$ & 0.6723 & 1.0002 & 1.0438 & 1.0002 & 1.55$\times$ \\
         & Procrustes $\downarrow$ & 0.7225 & 1.4749 & 1.4511 & 1.4422 & 2.01$\times$ \\
        \midrule
        \multirow{3}{*}{no $\rho$}
         & linear CKA $\uparrow$ & 0.7379 & 0.0029 & 0.0071 & 0.0015 & \textbf{104}$\times$ \\
         & RBF CKA $\uparrow$ & 0.8634 & 0.0042 & 0.0091 & 0.0023 & \textbf{95}$\times$ \\
         & SVCCA $\uparrow$ & 0.7802 & 0.0459 & 0.0876 & 0.0394 & 8.9$\times$ \\
        \midrule
         & MI, exact (bits) $\uparrow$ & \textbf{7.3205} & 0.8100 & 1.2676 & 0.2027 & 5.8$\times$ \\
         & \quad MI$/H$ \ (null$/H$) & \textbf{1.000} & 0.111 & 0.195 & 0.031 & --- \\
        \bottomrule
    \end{tabular}
\end{table}

Squared magnitudes are common in real networks (fiber norms in steerable CNNs, attention scores, second-moment features), and they break linear transformation behavior: if $v(gx) = Pv(x)$, the energy block $(NPv)^2$ contains cross terms $v_jv_k$, so no matrix $\rho(g)$ exists to assume or fit.
We test this with the injective readout $\sigma(v) = [Mv \mid (Nv)^2]$.
In \Cref{tab:energy}, MI$/H$ stays at $1.000$, so any reported error belongs to the metric rather than the representation.
LEE sits at its shuffled null ($1.02\times$ separation) and detects nothing.
The fitted maps still rank the kernels correctly, but they leave residuals near $0.7$ on an exactly equivariant representation, which is a calibration failure that would read as non-equivariance in practice.
CKA and MI both keep working, with linear CKA at $0.7379$ against $0.0071$ on the control.

%------------------------------------------------------------------------------
\subsection{Mutual Information Precision Problem}
\label{sec:granularity}

Higher precision enlarges the alphabet, and the shuffled null rises with it. Our earlier MI experiments used coarse-graining to keep the null low, but choosing a granularity is difficult in practice, so we repeat them at full 16-bit precision.

\begin{table}[t]
    \centering
    \caption{Changing the granularity of what mutual information sees from 4 bits to the full 16 bits shows that the null reading rises with it due to the larger alphabet, making mutual information a much worse measure the more unique outputs there are. In a continuous space with infinite outputs, mutual information is not only intractable to compute, but also loses its ability to distinguish real equivariance from scrambles.}
    \label{tab:mi-granularity}
    \footnotesize
    \setlength{\tabcolsep}{2pt} % default is 6pt
    \begin{tabular}{lrrrrrrr}
        \toprule
        granularity & symbols & samp./sym & MI & $H$ & MI$/H$ & null & margin \\
        \midrule
        \multicolumn{8}{l}{\emph{no convolution} --- raw pixels} \\
        \quad full 16-bit & 65{,}536 & 1.00 & 16.0000 & 16.0000 & 1.000 & 16.0000 & \textbf{0.0000} \\
        \midrule
        \multicolumn{8}{l}{\textbf{Laplacian} --- exactly C4-equivariant} \\
        \quad full 16-bit & 45{,}707 & 1.43 & 15.1834 & 15.1834 & 1.000 & \textbf{14.3668} & \textbf{0.8165} \\
        \quad quadrant    & 486 & 134.8 & 7.3205 & 7.3205 & 1.000 & 0.8157 & \textbf{6.5048} \\
        \midrule
        \multicolumn{8}{l}{\textbf{Sobel-x} --- not C4-equivariant (control)} \\
        \quad full 16-bit & 1{,}225 & 53.5 & 4.6122 & 9.0427 & 0.510 & 2.6101 & \textbf{2.0020} \\
        \quad quadrant    & 139 & 471.5 & 1.2676 & 6.4871 & 0.195 & 0.2035 & 1.0641 \\
        \bottomrule
    \end{tabular}
\end{table}

\Cref{tab:mi-granularity} shows that by increasing to the full 16-bit resolution, the trustworthiness of the MI measure falls because the null measurement rises.

%------------------------------------------------------------------------------
\subsection{\ckale{} as an Assumption Free Equivariance Measure}

Other measures require assumptions about the group action. CKA's assumptions (orthogonal transformations or isotropic scaling) are much weaker. We compare measures on exactly equivariant models and on models trained with and without augmentation.

\begin{table}[t]
\centering
\caption{Three measures on exactly-SO(2) point clouds (three seeds, 300 epochs,
2048 samples for LieGG; lower is more equivariant). LEE ranks the provably
equivariant model last and an untrained network first; LieGG and \ckale{}
recover the correct order. The $\lVert\nabla\rVert$ column is the mechanism:
LEE $=\lVert L_X f\rVert/\lVert f\rVert$ is invariant to uniformly rescaling the
features but not to their being nearly constant, where the numerator vanishes
and the denominator does not. The untrained network's input gradient is over
$700\times$ smaller than the others'. LieGG normalizes by its polarization
matrix's largest singular value, where rescaling cancels, and is unaffected.}
\label{tab:lee-reversal}
\footnotesize
\setlength{\tabcolsep}{2pt} % default is 6pt
\begin{tabular}{lcccrrr}
\toprule
& \multicolumn{3}{c}{LieGG} & & & \\
\cmidrule(lr){2-4}
model & sym.\ var. & null dim & align & $\lVert\nabla\rVert$ & LEE & $1-$CKA \\
\midrule
Equivariant SO(2)    & \textbf{0.00000} & \textbf{1.0} & \textbf{1.0000} & 5.9228 & 0.9999 & \textbf{0.000000} \\
MLP, augmented       & 0.05760 & 0.0 & 0.0000 & 5.9187 & 0.3702 & 0.008651 \\
MLP, no augmentation & 0.06845 & 0.0 & 0.0000 & 6.1034 & 0.6413 & 0.042468 \\
MLP, untrained       & 0.16950 & 0.0 & 0.0000 & \textbf{0.0083} & \textbf{0.2518} & 0.460116 \\
\bottomrule
\end{tabular}
\end{table}

\Cref{tab:lee-reversal} and \cref{fig:lee-vs-cka} show that LEE's assumed output action leads it to report high error where there should be zeros, while CKA and LieGG identify the correct behavior.

\begin{table}[t]
\centering
\caption{LieGG makes an assumption that the input space has to be equivariant and the output space has to be invariant to rotation. Purposely building a model that has an equivariant output space shows that LieGG is unable to detect it.}
\label{tab:liegg-equivariant-output}
\footnotesize
\setlength{\tabcolsep}{2pt} % default is 6pt
\begin{tabular}{llccccc}
\toprule
& & \multicolumn{4}{c}{LieGG} & \\
\cmidrule(lr){3-6}
readout & $m$ & sym.\ var.\ $\downarrow$ & null dim & align $\uparrow$
        & $A_{\rm true}/$rand $\downarrow$ & $1-$CKA $\downarrow$ \\
\midrule
invariant scalar, \ \texttt{head(\textbar\textbar h\textbar\textbar)} & 1
  & 0.00003 & \textbf{1.0} & \textbf{1.0000} & 0.00008 & 0.000000 \\
invariant vector, \ \texttt{\textbar\textbar h\textbar\textbar} & 64
  & 0.00022 & \textbf{1.0} & \textbf{1.0000} & 0.00007 & 0.000033 \\
\textbf{equivariant}, \ \texttt{h} & 128
  & \textbf{0.43667} & \textbf{0.0} & \textbf{0.0000} & \textbf{0.87357}
  & \textbf{0.000000} \\
\bottomrule
\end{tabular}
\end{table}

However, LieGG assumes an invariant output, so it misses equivariance in a model with vector outputs (\cref{tab:liegg-equivariant-output}).

\begin{table}[t]
\centering
\caption{Comparison on SEIS equivariance measure between spatial and channel re-encoding. SEIS is able to detect spatial re-encodings but not channel re-encodings and is therefore unable to detect equivariance of steerable models.}
\label{tab:seis-mechanism}
\begin{tabular}{lccl}
\toprule
transformation of the features & $\mathcal{S}_{\rm equiv}$ & $\mathcal{S}_{\rm inv}$ & SEIS reads it as \\
\midrule
identity (reference) & 1.0000 & 1.0000 & --- \\
\midrule
\textbf{spatial re-encoding} \ rotate the grid & \textbf{1.0000} & 0.1311 & \emph{re-encoded} \ \checkmark \\
\textbf{channel re-encoding} \ the true $\rho(g)$ & \textbf{0.2238} & 0.0941 & \emph{destroyed} \ $\times$ \\
\midrule
disjoint images (reference) & 0.1361 & 0.0767 & --- \\
\bottomrule
\end{tabular}
\end{table}

SEIS has a similar problem to the Lie derivative method where it is unable to detect the equivariance of steerable networks. We compare SEIS between a network that does spatial re-encoding and the steerable network which does spatial and channel re-encodings. \Cref{tab:seis-mechanism} shows that SEIS is able to detect equivariance only when it changes the space of the features, but not the channel of the features. SEIS is reported separately because its construction requires an unpooled spatial grid: it cannot be evaluated on the point-cloud experiments, which have no spatial axis, nor on the pooled comparison, where the axis it operates on has been averaged away.

%------------------------------------------------------------------------------
\subsection{Robustness of \ckale{} Across Many Domains}
\label{sec:robustness}

To analyze the performance of CKA on real life models and datasets, we used 6 unique tasks: image recognition, image recolorization, fluid flow prediction on a grid, fluid flow prediction on a point cloud, stress tensor prediction on a grid, and stress tensor prediction on a point cloud. 
For image recognition, we used CIFAR-10 \citep{krizhevsky2009learning} and MNIST-Font \citep{wang2023mnistfont} datasets. 
For colorization we also used CIFAR-10 and STL-10 \citep{coates2011stl10}.
For the fluid flow and stress datasets we generated our own datasets. 
To generate the fluid flow dataset we generate a bunch of random points with a velocity and temperature in a vertical 2 dimensional slice of space with warm blobs placed lower in the field and cold blobs placed higher in the field. We walk the simulation forward 5 steps and use that as the final result.
To generate the stress dataset we randomly generate the stress potential field. To produce symmetry breaking effects we apply a stiffness and a directional shear coupling along the x direction.
For any grid-like dataset we apply $C_4$ transformations during training the augmented models and for any point cloud dataset we apply $SO(2)$ transformations so the model learns $SO(2)$ equivariance.

For the grid-like datasets we use 4 unique models, a convolutional neural network (CNN), a UNet \citep{ronneberger2015unet}, a vision transformer (ViT) \citep{dosovitskiy2021vit} and a perceptron. 
For the point cloud datasets we again use a perceptron, but also a multilayer perceptron, a PointNet \citep{qi2017pointnet} and a SetTransformer \citep{lee2019settransformer}. None of these models are equivariant which is necessary to see how they may learn equivariance when presented with augmented data.

\begin{figure}
    \centering
    \includegraphics[width=\linewidth]{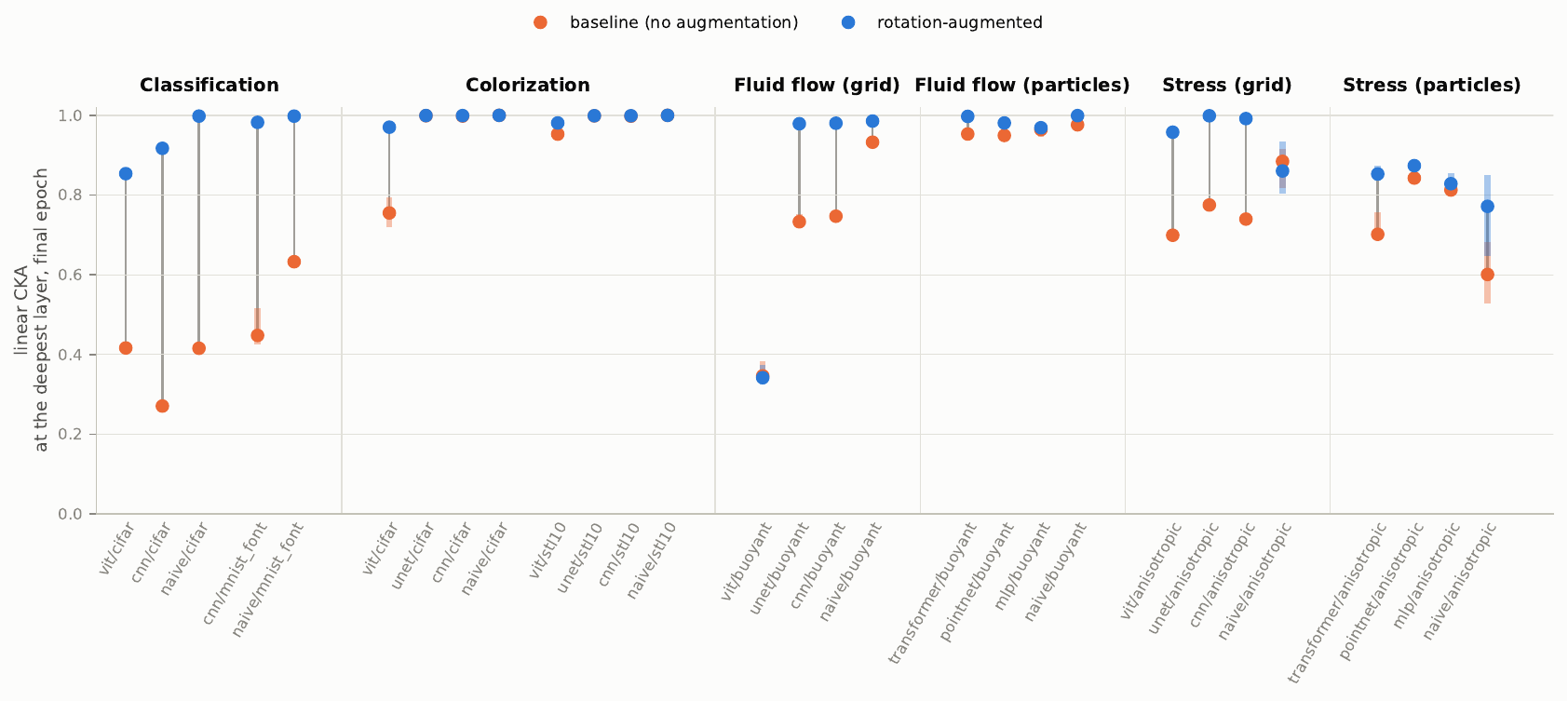}
    \caption{Final-layer CKA across models and datasets for $C_4$ and $SO(2)$ tasks (``naive'' denotes the perceptron). Augmentation increases learned equivariance on most tasks but not colorization, likely because colorization relies more on texture than position. Error bars show the 25th and 75th percentiles.}
    \label{fig:sweep-effects}
\end{figure}

For each dataset and model, we run 10 seeds. We measure the equivariance error against the test dataset with 90 degree rotations for grid like data and 21.17 degree rotations for pointlike data (16 non-identity angles). \Cref{fig:sweep-effects} shows that for most tasks, augmentation improves last-layer equivariance.

%------------------------------------------------------------------------------
\section{Conclusion}

We showed that normalized mutual information gives a necessary condition for equivariance that requires no knowledge of the feature-space action, and that it is reliable in small discrete settings. Because MI degrades with large alphabets and is intractable for continuous activations, we introduced \ckale{}, which is exact under orthogonal feature transformations and scales across tasks and architectures without discovering or assuming the group action.

\paragraph{Limitations and Future Work} Mutual information cannot be computed exactly in continuous spaces. CKA is limited to orthogonal transformations, though it is more resilient to random scrambles and covers the transformation class that steerable equivariant networks use. We also subsample infinite groups. Future work could avoid this subsampling and explore kernels that capture non-orthogonal transformations.

\subsection*{AI use statement}

In this work, we used generative AI tools for creating code to generate synthetic datasets, to assist in the writing of mathematical claims, critical ingredients for those claims, and the construction of proofs, to design and provide feedback on research methodology, and to implement certain experiments.
We have not used generative AI tools to develop theoretical models or conceptual frameworks, propose or refine hypotheses, or interpret results,
and assistance with translation and with cleaning and reformatting datasets is not applicable to this work.
Additionally, we used generative AI tools to create scientific figures, suggest experimental parameters, assist with drafting and editing papers, discover related works, and write code. We have reviewed all AI-assisted work. We verified all references from LLMs to ensure they were correct. One author checked the code for correctness and logical results. All authors ensured any information written on the paper was correct. We take responsibility for the final content of this work,
including text, claims or artifacts produced with the aid of generative AI.

% \subsection*{Ethics statement}

% (This section is \textbf{recommended} and does not count toward the page limit.)

% If authors feel that their paper submission raises questions regarding the Code
% of Ethics, they are encouraged to include a paragraph of Ethics Statement (at
% the end of the main text before references) to address potential concerns where
% appropriate. Topics include, but are not limited to, studies that involve human
% subjects, practices to data set releases, potentially harmful insights,
% methodologies and applications, potential conflicts of interest and sponsorship,
% discrimination/bias/fairness concerns, privacy and security issues, legal
% compliance, and research integrity issues (e.g., IRB, documentation, research
% ethics). This statement should not be more than 1 page.

\subsection*{Reproducibility statement}

All code used to run the experiments in our project was provided in the supplementary materials. 
Additionally, we supply descriptions of the dataset creation and experiments we ran in \cref{sec:robustness}.

% \subsubsection*{Author Contributions}
% If you'd like to, you may include  a section for author contributions as is done
% in many journals. This is optional and at the discretion of the authors.

% \section{Acknowledgments}

% This work was completed in part using the Explorer Cluster, supported by Northeastern University's Research Computing team. 
% Additionally, Claire Schlesinger is supported by the NSF Graduate Research Fellowship. 
% Robin Walters would like to acknowledge support from NSF Grants 2442658 and 2134178. 
% This work is supported by the National Science Foundation under Cooperative Agreement PHY-2019786 (The NSF AI Institute for Artificial Intelligence and Fundamental Interactions, \url{http://iaifi.org/}).

\bibliography{iclr2027_conference}
\bibliographystyle{iclr2027_conference}

\appendix
\section{Lemmas}

\begin{lemma}
    $\mathrm{NMI}(X, Y) = 1 \iff H(X|Y) = H(Y|X) = 0$.
    \label{lemma:mutual-determinism}
\end{lemma}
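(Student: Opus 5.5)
We will prove the equivalence $\mathrm{NMI}(X,Y)=1 \iff H(X|Y)=H(Y|X)=0$ by rewriting NMI in terms of conditional entropies. We work with finite discrete random variables and assume $H(X)+H(Y)>0$ so that NMI is defined; the degenerate case where both are constant is excluded or handled by convention. The key identity comes from the two expansions of mutual information, $I(X;Y)=H(X)-H(X|Y)$ and its symmetric counterpart $I(X;Y)=H(Y)-H(Y|X)$. Adding them gives
\begin{equation}
2I(X;Y) = H(X)+H(Y) - \bigl(H(X|Y)+H(Y|X)\bigr).
\end{equation}
Dividing by $H(X)+H(Y)$ then yields
\begin{equation}
\mathrm{NMI}(X,Y) = 1 - \frac{H(X|Y)+H(Y|X)}{H(X)+H(Y)}.
\end{equation}

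The symmetric expansion of $I(X;Y)$ is the only ingredient beyond the paper's definitions. It follows from the chain rule $H(X,Y)=H(Y)+H(X|Y)=H(X)+H(Y|X)$, which is itself immediate from $p(x,y)=p(y)p(x|y)$. We will record this step briefly.

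For $(\Leftarrow)$: if both conditional entropies vanish, the fraction in the displayed formula is $0$, so $\mathrm{NMI}=1$.

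For $(\Rightarrow)$: if $\mathrm{NMI}=1$, the numerator $H(X|Y)+H(Y|X)$ must be $0$. Each conditional entropy is a nonnegative average of entropies of conditional distributions, since $-\log p(x|y)\ge 0$ for discrete variables. A sum of two nonnegative terms equal to zero forces both terms to be zero.

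The main subtlety is not algebraic but definitional. The argument needs nonnegativity of conditional entropy, which holds for discrete Shannon entropy but fails for differential entropy; this matches the paper's restriction of \mime{} to the finite discrete setting. It also needs $H(X)+H(Y)>0$. I would state both assumptions explicitly at the start of the proof.
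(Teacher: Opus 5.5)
Your proof is correct and follows the same route as the paper: add the two expansions $I(X;Y)=H(X)-H(X|Y)=H(Y)-H(Y|X)$, write $\mathrm{NMI}(X,Y) = 1 - \frac{H(X|Y)+H(Y|X)}{H(X)+H(Y)}$, and read off both directions. You are slightly more careful than the paper, which stops at $H(X|Y)+H(Y|X)=0$ without invoking nonnegativity of discrete conditional entropy to conclude that each term vanishes, and which leaves the assumption $H(X)+H(Y)>0$ implicit.
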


\begin{proof}
    By definition of mutual information, $I(X;Y) = H(X) - H(X|Y) = H(Y) - H(Y|X)$, therefore $2I(X;Y) = H(X) + H(Y) - (H(X|Y) + H(Y|X))$. 
    
    If $\mathrm{NMI}(X, Y) = 1$, then by definition $\frac{2I(X;Y)}{H(X) + H(Y)} = \frac{H(X) + H(Y) - (H(X|Y) + H(Y|X))}{H(X) + H(Y)} = 1 - \frac{H(X|Y) + H(Y|X)}{H(X) + H(Y)} = 1$, so $ \frac{H(X|Y) + H(Y|X)}{H(X) + H(Y)} = 0$ which requires $H(X|Y) + H(Y|X) = 0$

    If $H(X|Y) = H(Y|X) = 0$, then $2I(X;Y) = H(X) + H(Y) - (H(X|Y) + H(Y|X)) = H(X) + H(Y)$. Therefore $\mathrm{NMI}(X, Y)=1$.
\end{proof}

\begin{lemma}
\label{lemma:extremal-cka-proportional-gram-matrices}
    Under the nondegeneracy assumption $\tilde{X}\neq 0$ and $\tilde{Y}\neq 0$,
    \[
    \mathrm{CKA}(K_X, K_Y) = 1 \iff \tilde{X} = c\,\tilde{Y} \text{ for some } c > 0,
    \]
    where $\tilde{X} = HK_XH$ and $\tilde{Y} = HK_YH$ are the centered Gram matrices.
    \label{lemma:extremal-cka-proprotional-gram-matrices}
\end{lemma}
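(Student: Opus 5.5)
The plan is to recognize CKA as a cosine similarity in the Frobenius inner product and apply the equality case of Cauchy--Schwarz. Since $H$ is symmetric and idempotent, $\mathrm{HSIC}(K_X,K_Y)=\langle HK_XH, HK_YH\rangle_F=\langle \tilde{X},\tilde{Y}\rangle_F$, and likewise $\mathrm{HSIC}(K_X,K_X)=\|\tilde{X}\|_F^2$ and $\mathrm{HSIC}(K_Y,K_Y)=\|\tilde{Y}\|_F^2$. Hence
\[
\mathrm{CKA}(K_X,K_Y)=\frac{\langle \tilde{X},\tilde{Y}\rangle_F}{\|\tilde{X}\|_F\,\|\tilde{Y}\|_F}.
\]
The nondegeneracy assumption $\tilde{X}\neq 0$, $\tilde{Y}\neq 0$ guarantees that both norms are strictly positive. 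The quotient is therefore well defined, and it lies in $[-1,1]$ by Cauchy--Schwarz on the real inner product space $\mathbb{R}^{n\times n}$ with $\langle A,B\rangle_F=\mathrm{tr}(A^\top B)$.

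For the direction ($\Leftarrow$), substitute $\tilde{X}=c\tilde{Y}$ with $c>0$. The numerator becomes $c\|\tilde{Y}\|_F^2$ and the denominator becomes $|c|\,\|\tilde{Y}\|_F^2=c\|\tilde{Y}\|_F^2$, so the ratio equals $1$.

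For the direction ($\Rightarrow$), the key step is the equality case of Cauchy--Schwarz. I would prove it directly by expanding
\[
\Bigl\|\tfrac{\tilde{X}}{\|\tilde{X}\|_F}-\tfrac{\tilde{Y}}{\|\tilde{Y}\|_F}\Bigr\|_F^2 = 2-2\,\mathrm{CKA}(K_X,K_Y).
\]
If $\mathrm{CKA}=1$, the right-hand side is $0$, so $\tilde{X}=c\tilde{Y}$ with $c=\|\tilde{X}\|_F/\|\tilde{Y}\|_F$, which is strictly positive by nondegeneracy. This identity also shows why the sign is positive: the value $-1$ would correspond to $c<0$, and it is excluded.

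The main subtlety is bookkeeping rather than depth. First, one must confirm that the HSIC in the paper's definition is exactly the Frobenius inner product of the centered Gram matrices. This holds because $H^\top=H$, so no extra factors such as $(n-1)^{-2}$ appear, and any such constant would cancel in the ratio anyway. Second, one must use nondegeneracy both to make CKA well defined and to ensure $c\neq 0$. If CKA is instead computed with the unbiased HSIC estimator mentioned in the experiments, the lemma should be read for the biased (plug-in) form defined in the background section. For the unbiased estimator the Cauchy--Schwarz structure is not immediate, so I would state the lemma only for the defined form.
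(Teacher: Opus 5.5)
Your proof is correct. It shares the paper's framing: CKA is the Frobenius cosine of $\tilde{X}$ and $\tilde{Y}$, bounded via Cauchy--Schwarz. The difference is in how you obtain $c>0$ in the forward direction.

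The paper cites the equality case of Cauchy--Schwarz to get $\tilde{X}=c\tilde{Y}$ with $c\neq 0$. It then uses positive semidefiniteness of the centered Gram matrices, $\operatorname{tr}(\tilde{X}\tilde{Y})=\operatorname{tr}(\tilde{X}^{1/2}\tilde{Y}\tilde{X}^{1/2})\ge 0$, to rule out $c<0$. Your identity $\bigl\|\tilde{X}/\|\tilde{X}\|_F-\tilde{Y}/\|\tilde{Y}\|_F\bigr\|_F^2=2-2\,\mathrm{CKA}(K_X,K_Y)$ proves the equality case and gives $c=\|\tilde{X}\|_F/\|\tilde{Y}\|_F>0$ in one step, with no positivity argument. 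In fact the paper's PSD step is not needed for the equivalence: if $\tilde{X}=c\tilde{Y}$ with $c<0$, then $\mathrm{CKA}=-1$ directly.

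\textbf{What each route buys.} The paper's detour yields the side fact $\mathrm{CKA}\in[0,1]$ for genuine kernels, which backs the range claimed in the background section. Your route yields generality, since it uses only the inner-product structure and works for any nonzero real matrices.

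That generality resolves your caveat about the unbiased estimator the experiments actually use. The unbiased HSIC estimator of Song et al.\ equals, up to the factor $1/(n(n-3))$, the Frobenius inner product of the U-centered (zero-diagonal) kernel matrices. Your argument therefore carries over unchanged once $\tilde{X},\tilde{Y}$ are replaced by those matrices. The paper's PSD argument does not carry over: nonzero U-centered matrices have zero trace and are therefore indefinite, and unbiased CKA can be negative.
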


\begin{proof}
    Vectorize the centered Gram matrices and treat them as elements of
    $\mathbb{R}^{n \times n}$ with the Frobenius inner product
    $\langle A, B \rangle_F = \operatorname{tr}(A^{\top}B)$. By definition, CKA is the
    cosine of the angle between them:
    $\mathrm{CKA}(K_\mathbf{X}, K_\mathbf{Y}) = \dfrac{\langle \tilde{X}, \tilde{Y} \rangle_F}{\|\tilde{X}\|_F\,\|\tilde{Y}\|_F}$.
    
    By the Cauchy--Schwarz inequality,
    $|\langle \tilde{X}, \tilde{Y} \rangle_F| \leq \|\tilde{X}\|_F\,\|\tilde{Y}\|_F$,
    so $\mathrm{CKA} \in [-1, 1]$, with equality $|\mathrm{CKA}| = 1$ iff $\tilde{X}$
    and $\tilde{Y}$ are linearly dependent, i.e.\ $\tilde{X} = c\,\tilde{Y}$ for some
    $c \neq 0$ (nonzero by nondegeneracy).
    
    Since $K_\mathbf{X}, K_\mathbf{Y}$ are Gram matrices they are positive semidefinite, and centering
    preserves this, so
    $\langle \tilde{X}, \tilde{Y} \rangle_F = \operatorname{tr}(\tilde{X}\tilde{Y})
    = \operatorname{tr}\!\big(\tilde{X}^{1/2}\tilde{Y}\,\tilde{X}^{1/2}\big) \geq 0$.
    Hence the numerator is nonnegative and $\mathrm{CKA} \in [0, 1]$.
    
    Combining the two: $\mathrm{CKA} = 1$ forces the Cauchy--Schwarz equality
    $\tilde{X} = c\,\tilde{Y}$ together with $\langle \tilde{X}, \tilde{Y} \rangle_F \ge 0$.
    Substituting gives $\langle \tilde{X}, \tilde{Y} \rangle_F = c\,\|\tilde{Y}\|_F^2$,
    which is nonnegative only when $c > 0$. Conversely, if $\tilde{X} = c\,\tilde{Y}$
    with $c > 0$, direct substitution yields $\mathrm{CKA} = 1$.
\end{proof}

\section{CKA and Aliasing}
\label{app:cka-aliasing}

\subsection{A resampling floor for the continuous sweep}

The sweep of \Cref{fig:lee-vs-cka} compares $f(x)$ against
$f(\mathrm{rot}(\theta)x)$ at arbitrary $\theta$, and a non-lattice rotation of a
pixel grid is not a permutation. It requires interpolation, which low-pass
filters the image and discards content at the corners, so $\mathrm{rot}(\theta)x$
is not ``$x$, rotated'' but a rotated and slightly damaged $x$. The rising error
away from the group elements could therefore reflect the probe damaging its own
input rather than the model failing to be equivariant.

That possibility can be measured directly. Linear CKA is invariant to orthogonal
maps and a quarter turn is a permutation of pixels, so
$1-\mathrm{CKA}(x, \mathrm{rot}(\theta)x)$ evaluated on \emph{raw pixels with no
network} is exactly zero on the lattice and positive off it. It isolates the
rotation operator's departure from being an isometry, with the model removed.

\begin{table}[h]
\centering
\caption{The resampling floor. \textbf{(a)} A C16 steerable conv at its own
sixteen group elements: the input floor is $1-\mathrm{CKA}$ between raw pixels
and their own rotation, the model column the same quantity on the conv's
features. \textbf{(b)} Both, by grid resolution. All entries are
$1-\mathrm{CKA}$; 128 MNIST images, $29^2$ grid in panel (a).}
\label{tab:resampling-floor}
\begin{tabular}{llccc}
\toprule
\multicolumn{5}{l}{\textbf{(a)} at the C16 model's own group elements} \\
\midrule
rotation & on lattice? & input floor & model & model$/$floor \\
\midrule
$0^\circ, 90^\circ, 180^\circ, 270^\circ$ & yes & \textbf{0.00000} & \textbf{0.00000} & --- \\
$22.5^\circ$ and its 3 repeats & no & 0.00011 & 0.00032 & 2.83 \\
$67.5^\circ$ and its 3 repeats & no & 0.00011 & 0.00026 & 2.30 \\
$45^\circ$ and its 3 repeats & no & 0.00013 & 0.00054 & 4.10 \\
\midrule
off-lattice mean & & 0.00012 & 0.00037 & 3.08 \\
\bottomrule
\end{tabular}

\vspace{1em}

\begin{tabular}{lccccc}
\toprule
\multicolumn{6}{l}{\textbf{(b)} the same quantities, by grid resolution} \\
\midrule
& \multicolumn{2}{c}{$22.5^\circ$} & \multicolumn{2}{c}{$45.0^\circ$} & $90^\circ$ \\
\cmidrule(lr){2-3}\cmidrule(lr){4-5}
grid & floor & model & floor & model & (both) \\
\midrule
$29^2$ & 0.00011 & 0.00032 & 0.00013 & 0.00054 & 0.00000 \\
$57^2$ & 0.00000 & 0.00004 & 0.00000 & 0.00005 & 0.00000 \\
$113^2$ & 0.00000 & 0.00000 & 0.00000 & 0.00000 & 0.00000 \\
$225^2$ & 0.00000 & 0.00000 & 0.00000 & 0.00000 & 0.00000 \\
\bottomrule
\end{tabular}
\end{table}

\Cref{tab:resampling-floor} reports it. The floor behaves as the argument
requires: exactly zero at the four quarter turns, $0.00011$--$0.00013$
elsewhere. The model's error sits about $3\times$ above it, so the convolution
amplifies the input damage rather than merely inheriting it. But both
quantities are of order $10^{-4}$, and both fall to zero by a $113^2$ grid, so
the entire off-lattice signal at these angles is a sampling artifact rather
than a property of the model.

The peaks in the continuous sweep run from $0.04$ for C16 to $0.51$ for C1,
between two and three orders of magnitude above this floor, so the group
structure recovered there is genuine signal. We report the negative because the
confound was plausible and the floor had not previously been measured at the
input, only at the model.

\subsection{CKA does not detect aliasing}

The same experiment read the other way exposes a limitation. Interpolation
genuinely destroys information---it is irreversible, and the representation-aware
equivariance error at these angles is two to three orders of magnitude larger
than what CKA reports. Downsampling the feature map at export makes the point
without any rotation at all.

\begin{table}[h]
\centering
\caption{Downsampling a feature map at export aliases it. Equivariance error is
$\lVert f(gx)-\rho(g)f(x)\rVert / \lVert\rho(g)f(x)\rVert$ with $\rho(g)$ read
off escnn's declared \texttt{FieldType}, so it is ground truth rather than a
competing estimate. The null is the shuffled control: the pairing between the
clean and rotated branches is destroyed while marginals, dimensionality and
sample count are held fixed. $\theta = 2.0$ rad ($114.6^\circ$), 256 MNIST
images.}
\label{tab:cka-blind}
\begin{tabular}{lccc}
\toprule
condition & equivariance error $\downarrow$ & linear CKA $\uparrow$ & CKA null \\
\midrule
grid-aligned reference ($\theta = \pi/2$) & \textbf{0.00000} & \textbf{1.0000} & 0.0729 \\
\midrule
full resolution & 0.13429 & 0.9994 & 0.0722 \\
subsample & \textbf{0.21735} & 0.9995 & 0.0727 \\
blur, then subsample & 0.15250 & 0.9989 & 0.0432 \\
\bottomrule
\end{tabular}
\end{table}

Naive subsampling inflates the equivariance error by 62\%, from $0.13429$ to
$0.21735$, and prefiltering before subsampling brings it back to $0.15250$---a
loss identified by mechanism and then repaired. Linear CKA reads ${\sim}0.999$
through all three and registers neither the damage nor the fix. The baseline is
not zero because $\theta = 2.0$ rad is off-lattice and a C$_N$ model on a square
grid is exact only at quarter turns, so $0.13429$ is the interpolation floor
above which the 62\% is measured.

The null makes the failure precise. A reading of $0.999$ could mean either that
the two feature sets really are nearly identical or that the metric returns
$0.999$ on anything---the second being the degenerate regime RBF CKA enters at
small bandwidth, where signal and shuffled control both reach $1.0000$.
Row-shuffling gives ${\sim}0.07$ here, so CKA sits roughly $14\times$ above its
own floor and the readings are real. The failure is one of \emph{range} rather
than of validity: across the whole experiment CKA spans $1.0000$ to $0.9980$
while the quantity it tracks runs from exactly zero to $0.236$.

It is not, then, that lost data degrades CKA's resolution. It is that CKA barely
notices data has been lost at all, because interpolation perturbs every feature
slightly while leaving their pairwise similarities almost intact.

\section{Progression of Models' Learned Equivariance}
\label{app:progression-learned-equivariance}

Using the same setup as described in \cref{sec:robustness}, we showcase each layer of every model across all epochs to show how learned equivariance evolves over time. 

\begin{figure}[h]
    \centering
    \includegraphics[width=\linewidth]{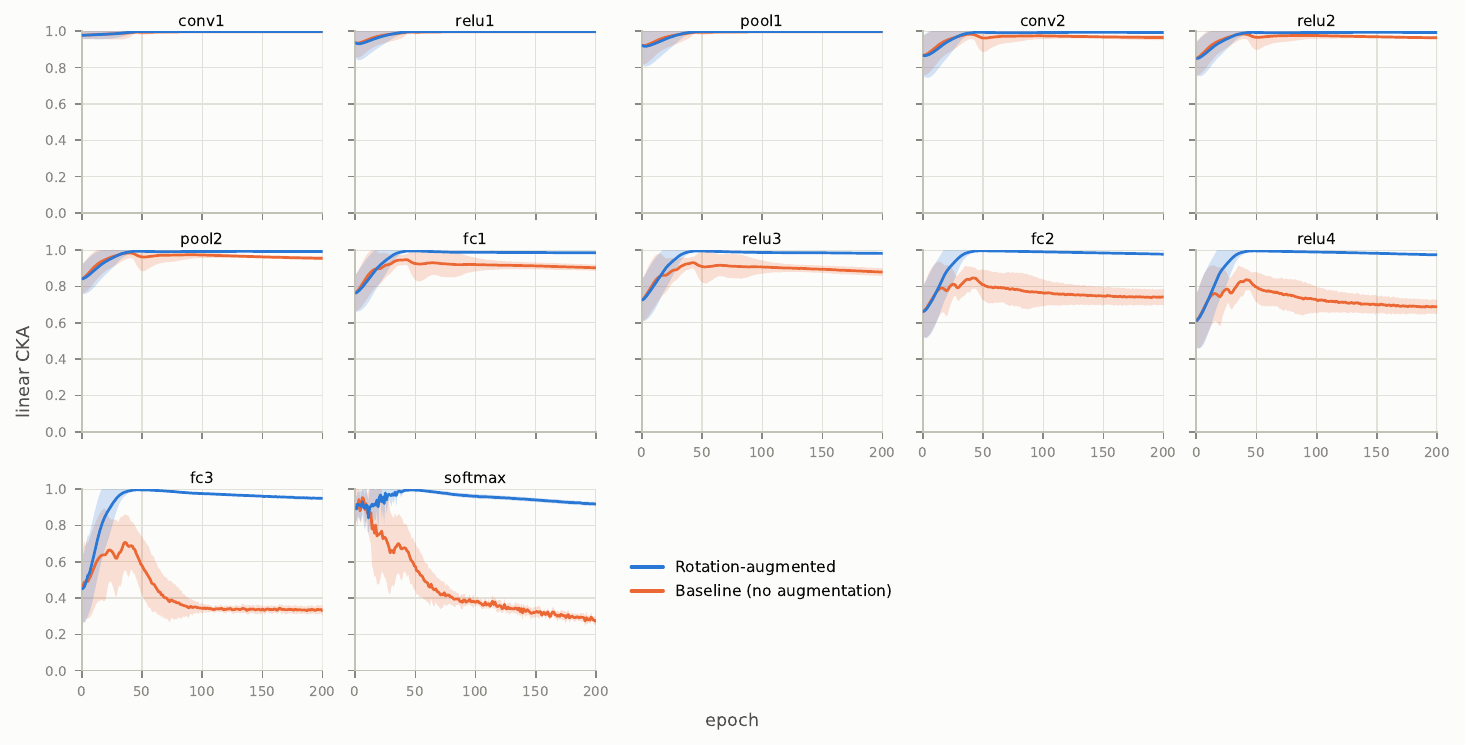}
    \caption{Full view of a CNN progression on learning equivariance on CIFAR-10 classification.}
    \label{fig:classification_cnn_cifar}
\end{figure}

\begin{figure}[h]
    \centering
    \includegraphics[width=\linewidth]{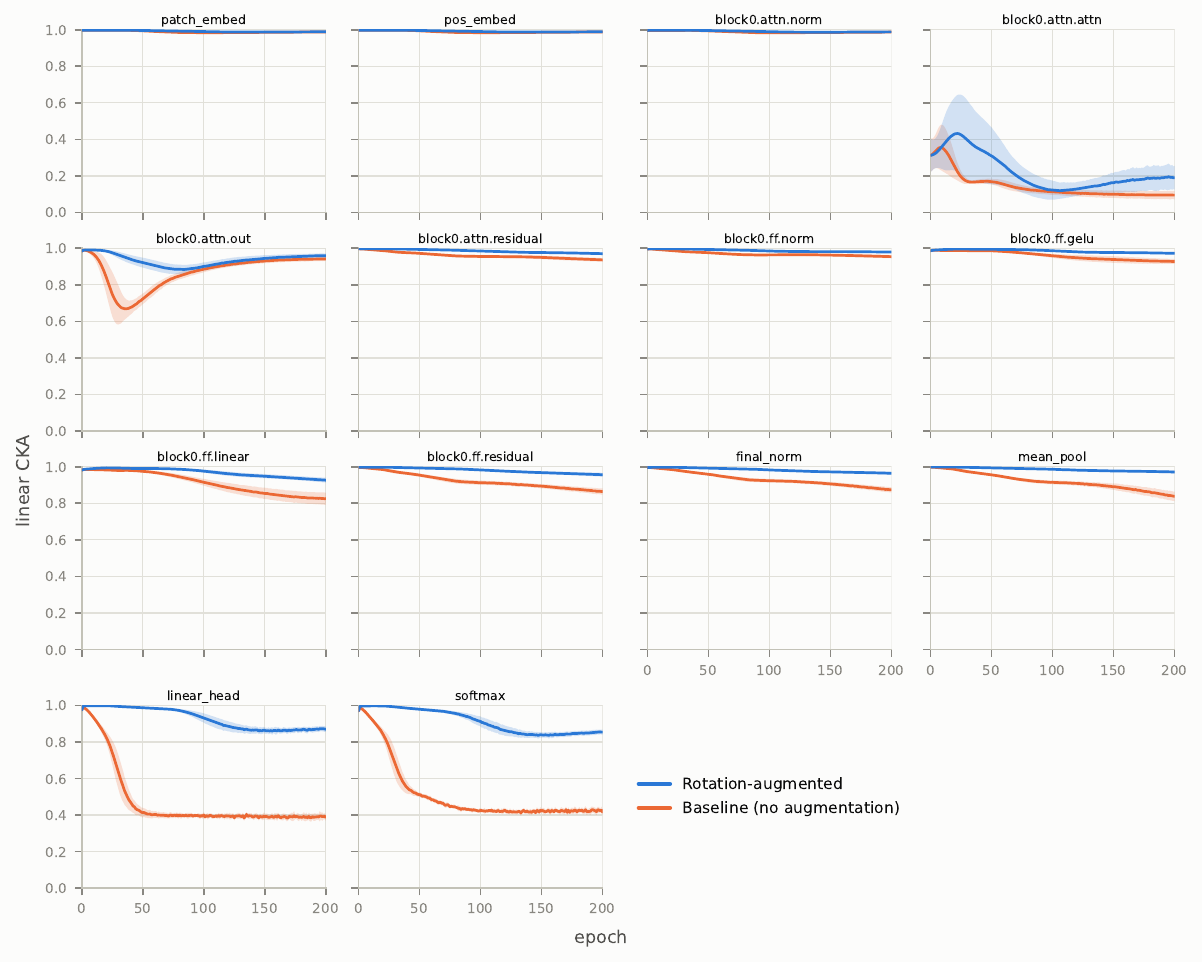}
    \caption{Full view of a ViT progression on learning equivariance on CIFAR-10 classification.}
    \label{fig:classification_vit_cifar}
\end{figure}

\begin{figure}[h]
    \centering
    \includegraphics[width=\linewidth]{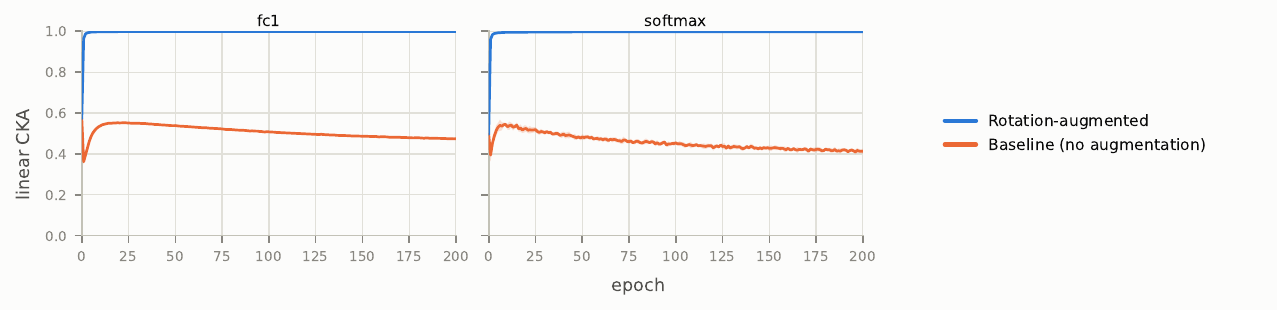}
    \caption{Full view of a perceptron progression on learning equivariance on CIFAR-10 classification.}
    \label{fig:classification_naive_cifar}
\end{figure}

\begin{figure}[h]
    \centering
    \includegraphics[width=\linewidth]{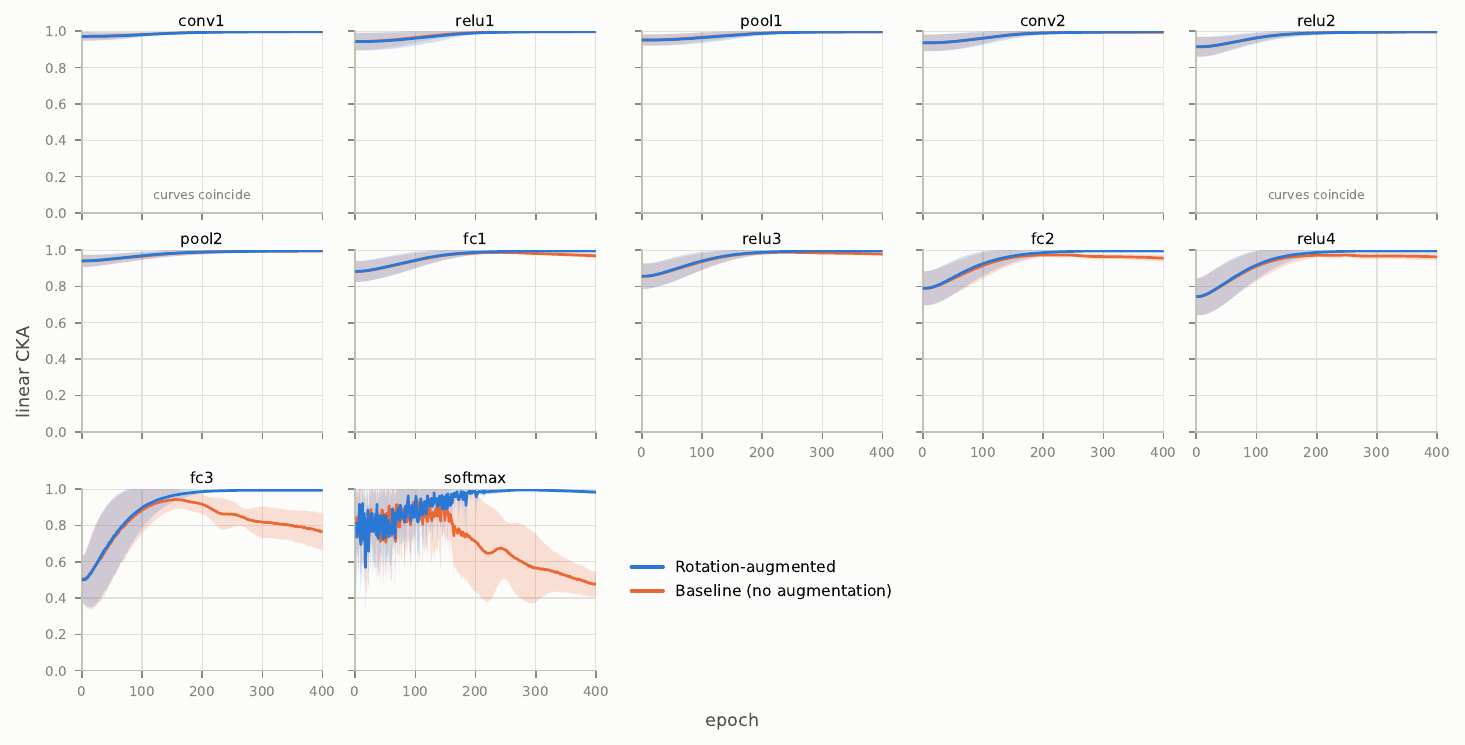}
    \caption{Full view of a CNN progression on learning equivariance on MNIST-Font classification.}
    \label{fig:classification_cnn_mnist_font}
\end{figure}

\begin{figure}[h]
    \centering
    \includegraphics[width=\linewidth]{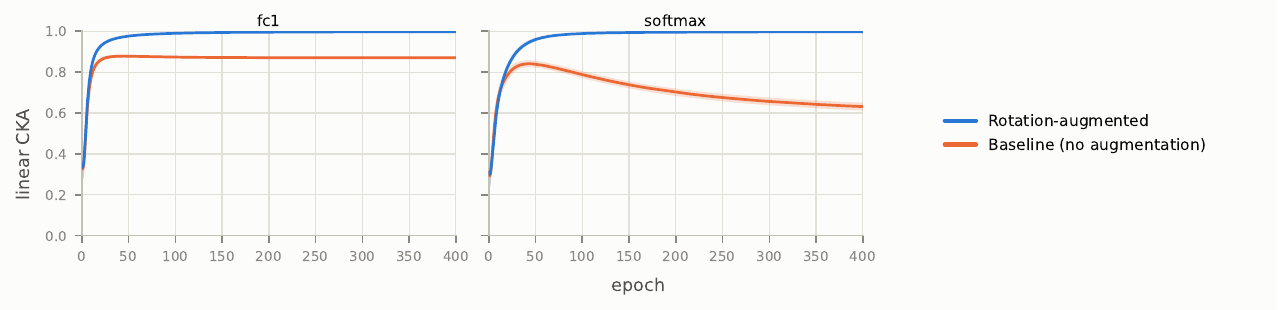}
    \caption{Full view of a perceptron progression on learning equivariance on MNIST-Font classification.}
    \label{fig:classification_naive_mnist_font}
\end{figure}

\begin{figure}[h]
    \centering
    \includegraphics[width=\linewidth]{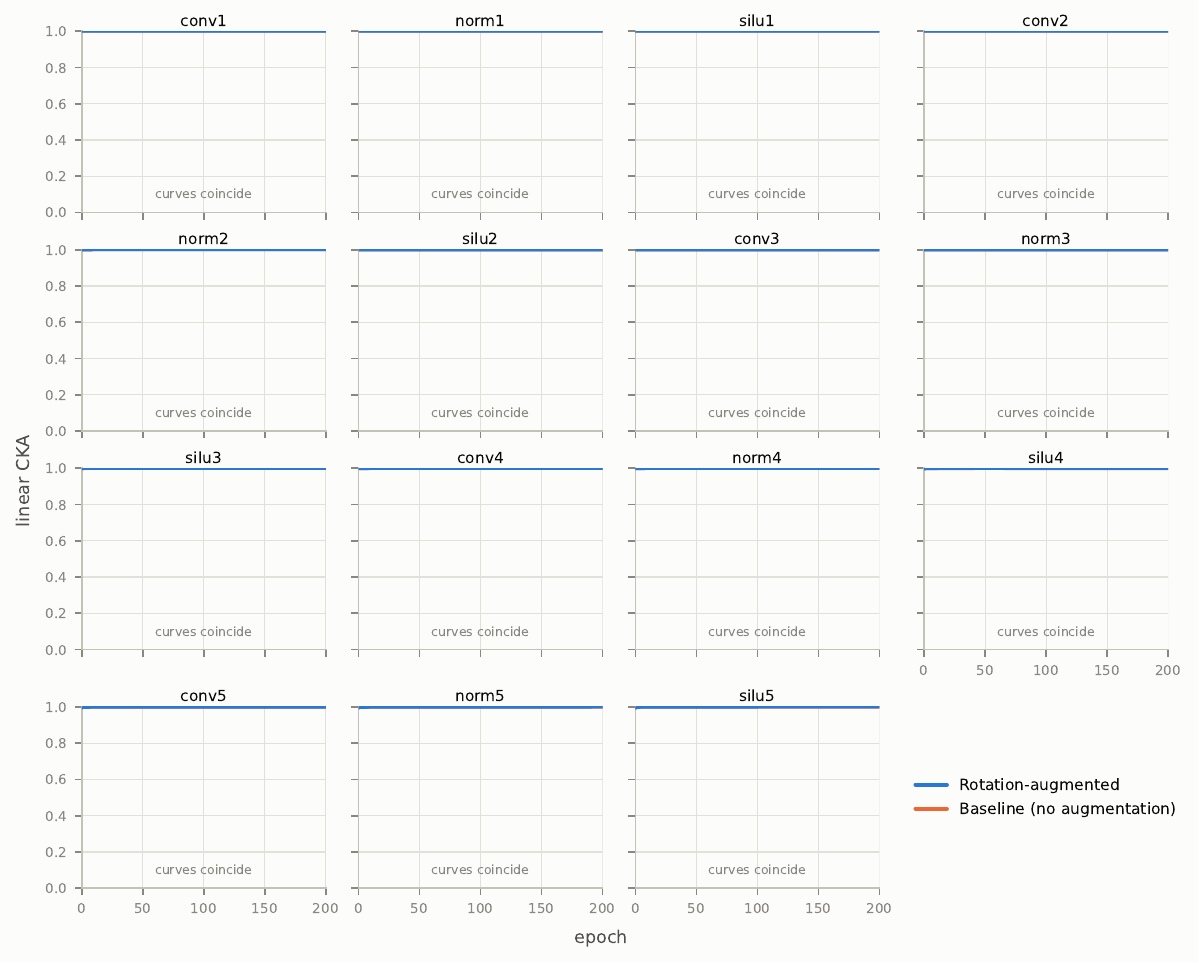}
    \caption{Full view of a CNN learning equivariance on CIFAR-10 colorization.}
    \label{fig:colorization_cnn_cifar}
\end{figure}

\begin{figure}[h]
    \centering
    \includegraphics[width=\linewidth]{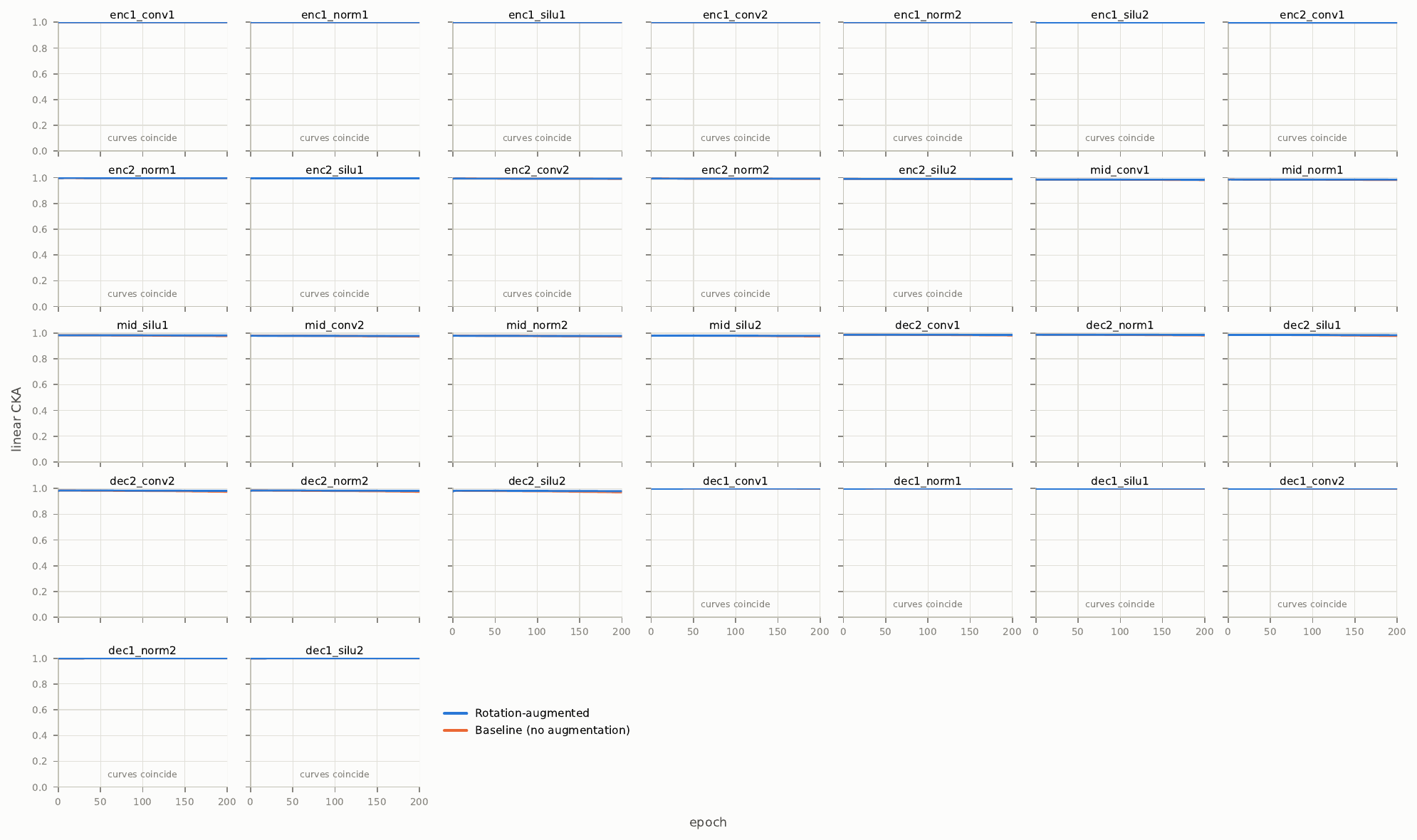}
    \caption{Full view of a UNet learning equivariance on CIFAR-10 colorization.}
    \label{fig:colorization_unet_cifar}
\end{figure}

\begin{figure}[h]
    \centering
    \includegraphics[width=\linewidth]{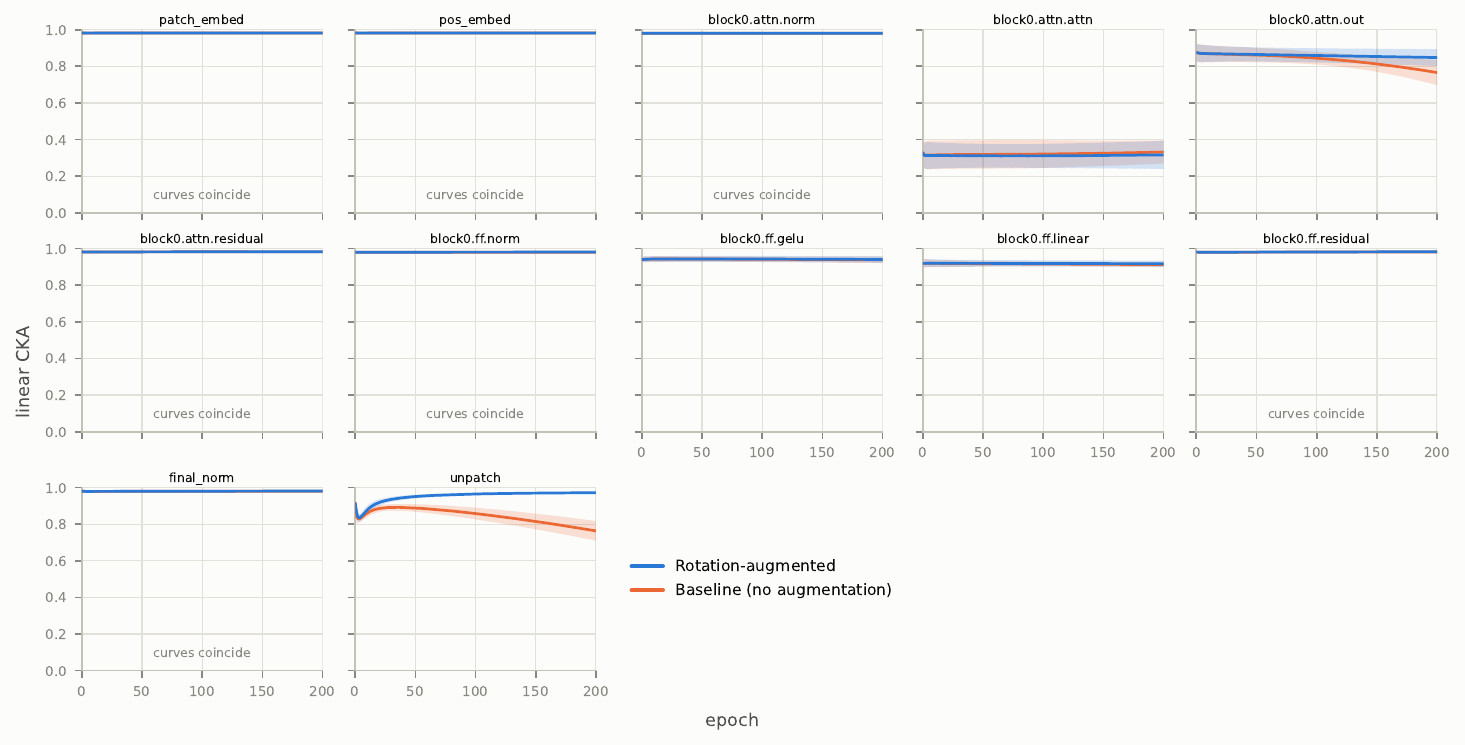}
    \caption{Full view of a ViT learning equivariance on CIFAR-10 colorization.}
    \label{fig:colorization_vit_cifar}
\end{figure}

\begin{figure}[h]
    \centering
    \includegraphics[width=\linewidth]{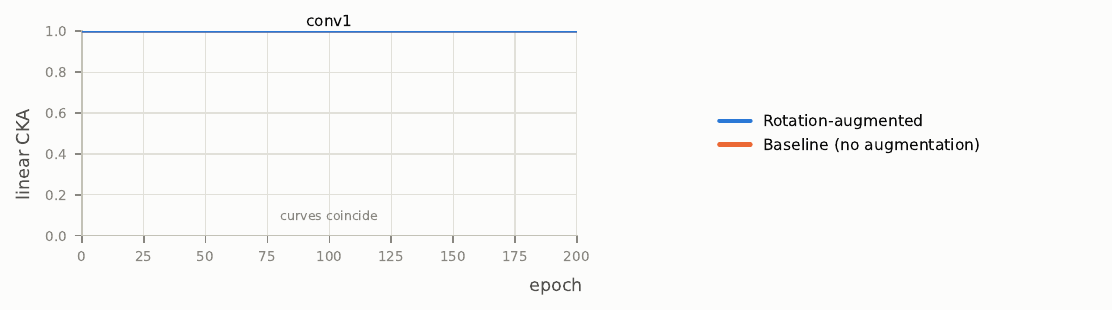}
    \caption{Full view of a perceptron learning equivariance on CIFAR-10 colorization.}
    \label{fig:colorization_naive_cifar}
\end{figure}

\begin{figure}[h]
    \centering
    \includegraphics[width=\linewidth]{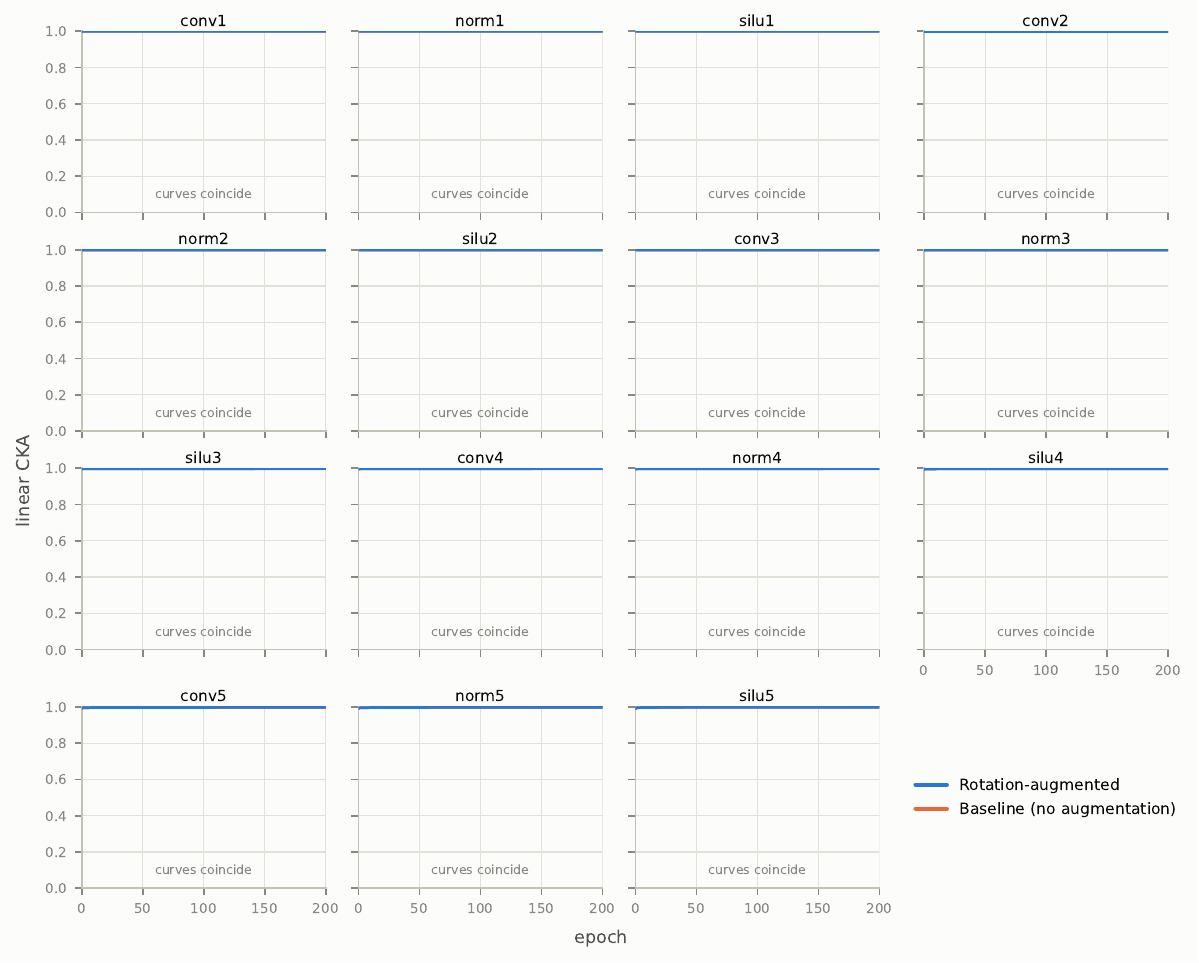}
    \caption{Full view of a CNN learning equivariance on STL-10 colorization.}
    \label{fig:colorization_cnn_stl}
\end{figure}

\begin{figure}[h]
    \centering
    \includegraphics[width=\linewidth]{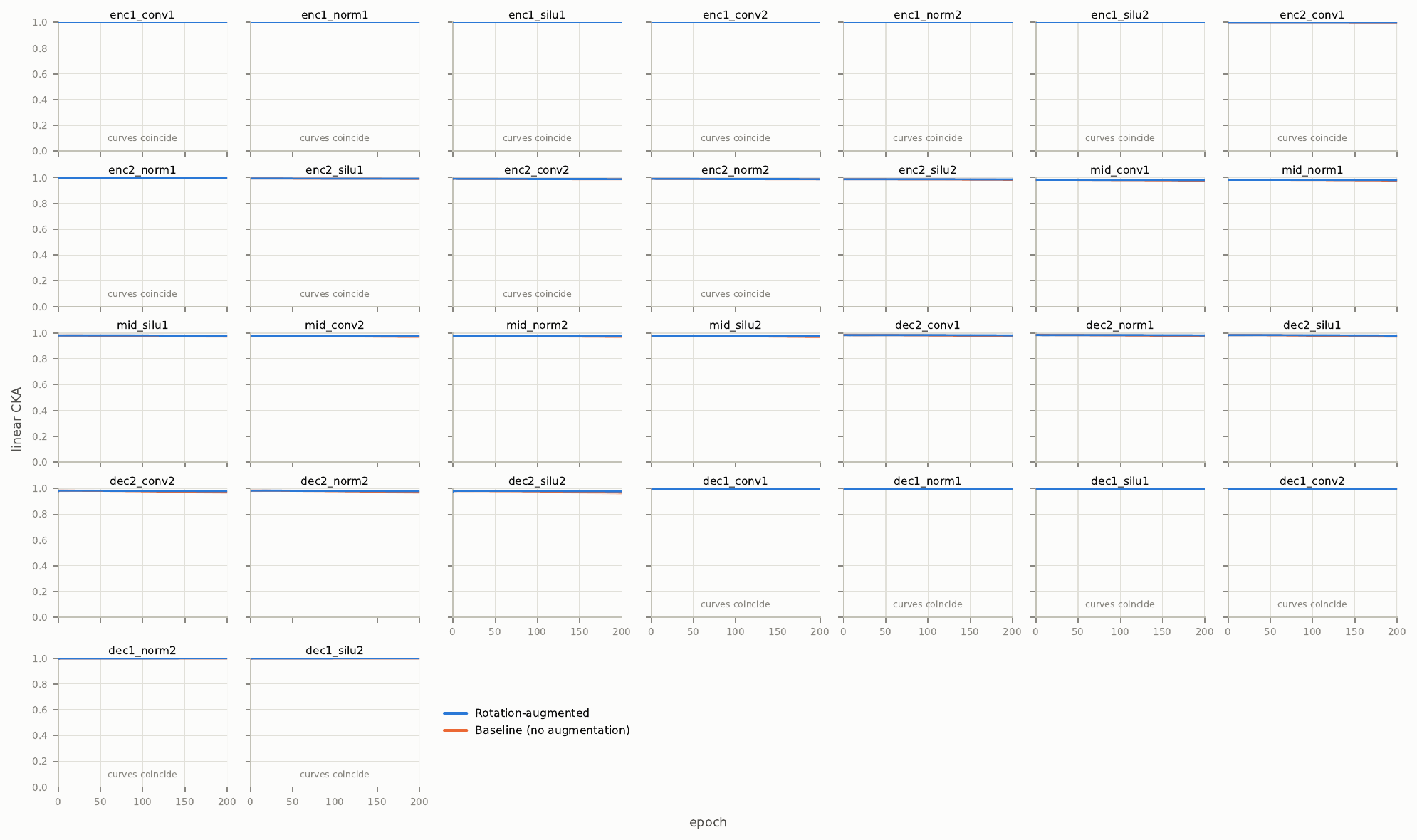}
    \caption{Full view of a UNet learning equivariance on STL-10 colorization.}
    \label{fig:colorization_unet_stl}
\end{figure}

\begin{figure}[h]
    \centering
    \includegraphics[width=\linewidth]{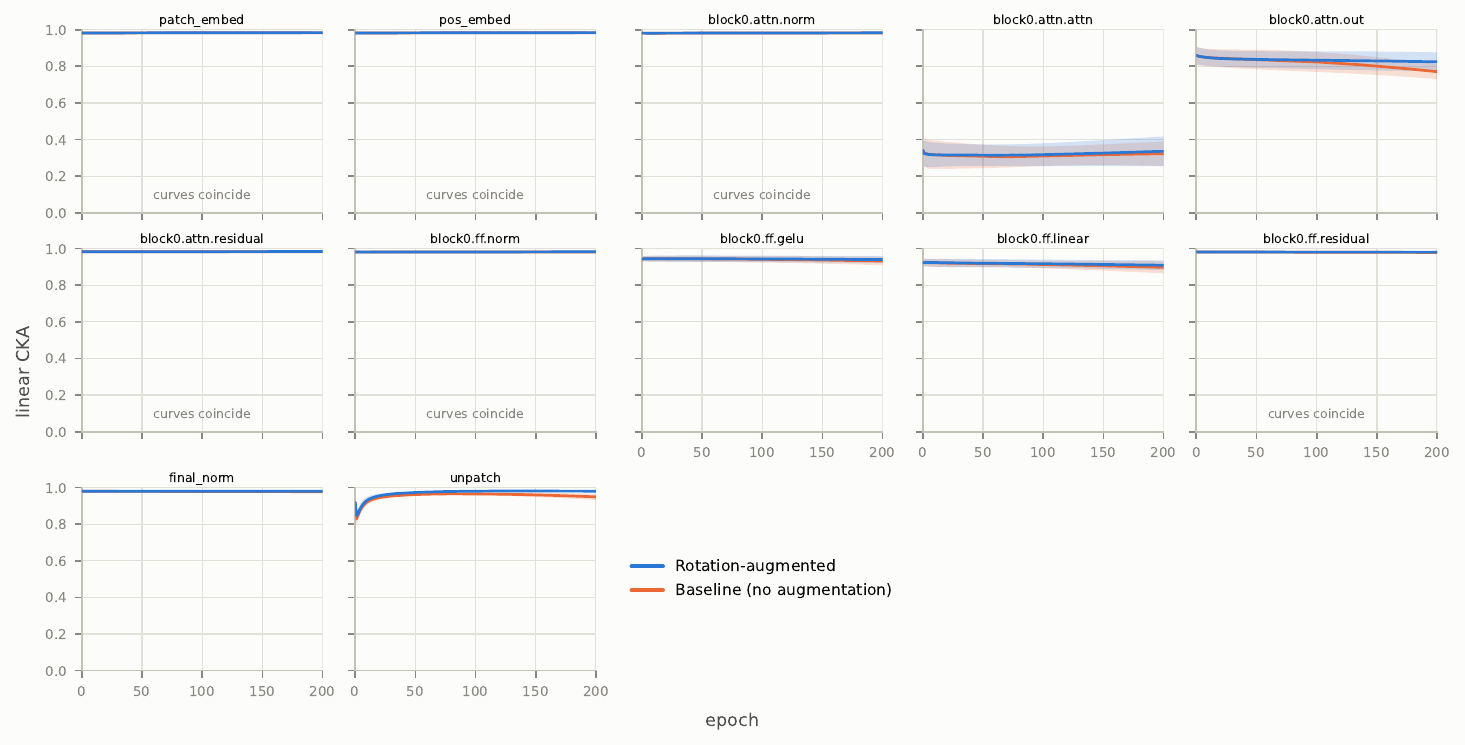}
    \caption{Full view of a ViT learning equivariance on STL-10 colorization.}
    \label{fig:colorization_vit_stl}
\end{figure}

\begin{figure}[h]
    \centering
    \includegraphics[width=\linewidth]{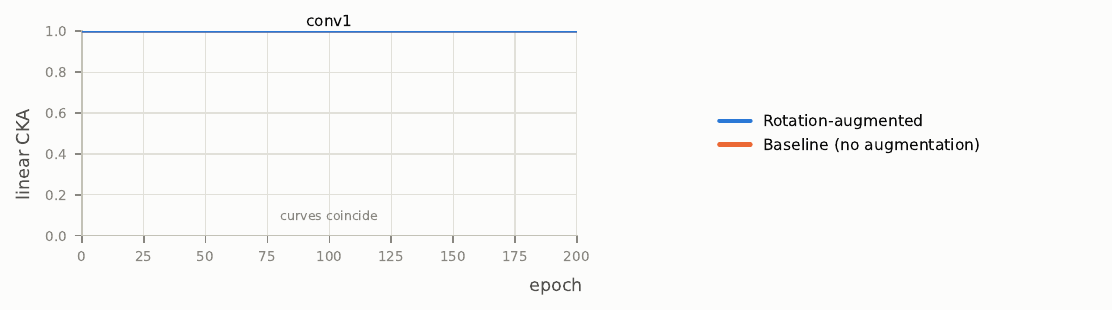}
    \caption{Full view of a perceptron learning equivariance on STL-10 colorization.}
    \label{fig:colorization_naive_stl}
\end{figure}

\begin{figure}[h]
    \centering
    \includegraphics[width=\linewidth]{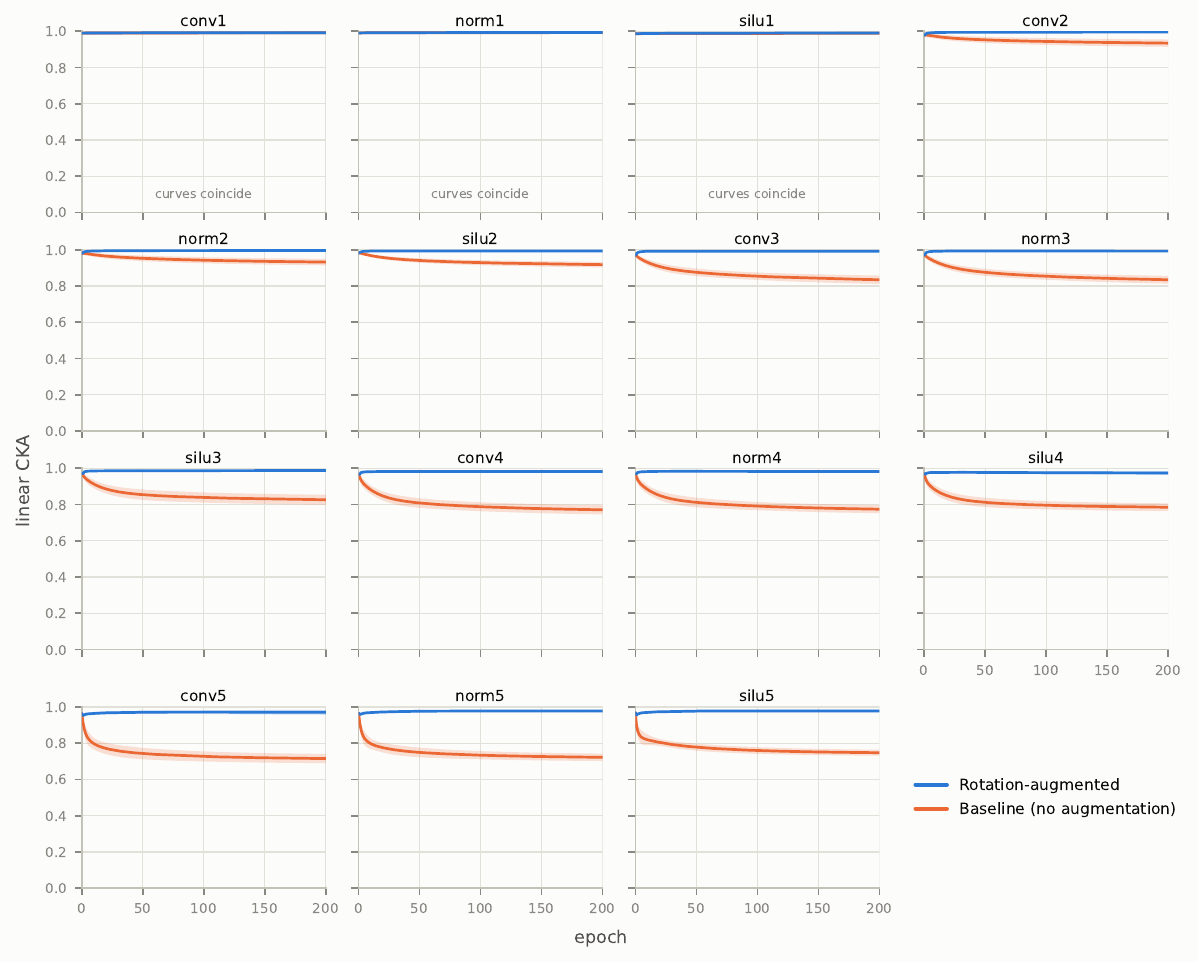}
    \caption{Full view of a CNN learning fluid dynamics on a grid-like buoyancy dataset.}
    \label{fig:fluid_flow_cnn}
\end{figure}

\begin{figure}[h]
    \centering
    \includegraphics[width=\linewidth]{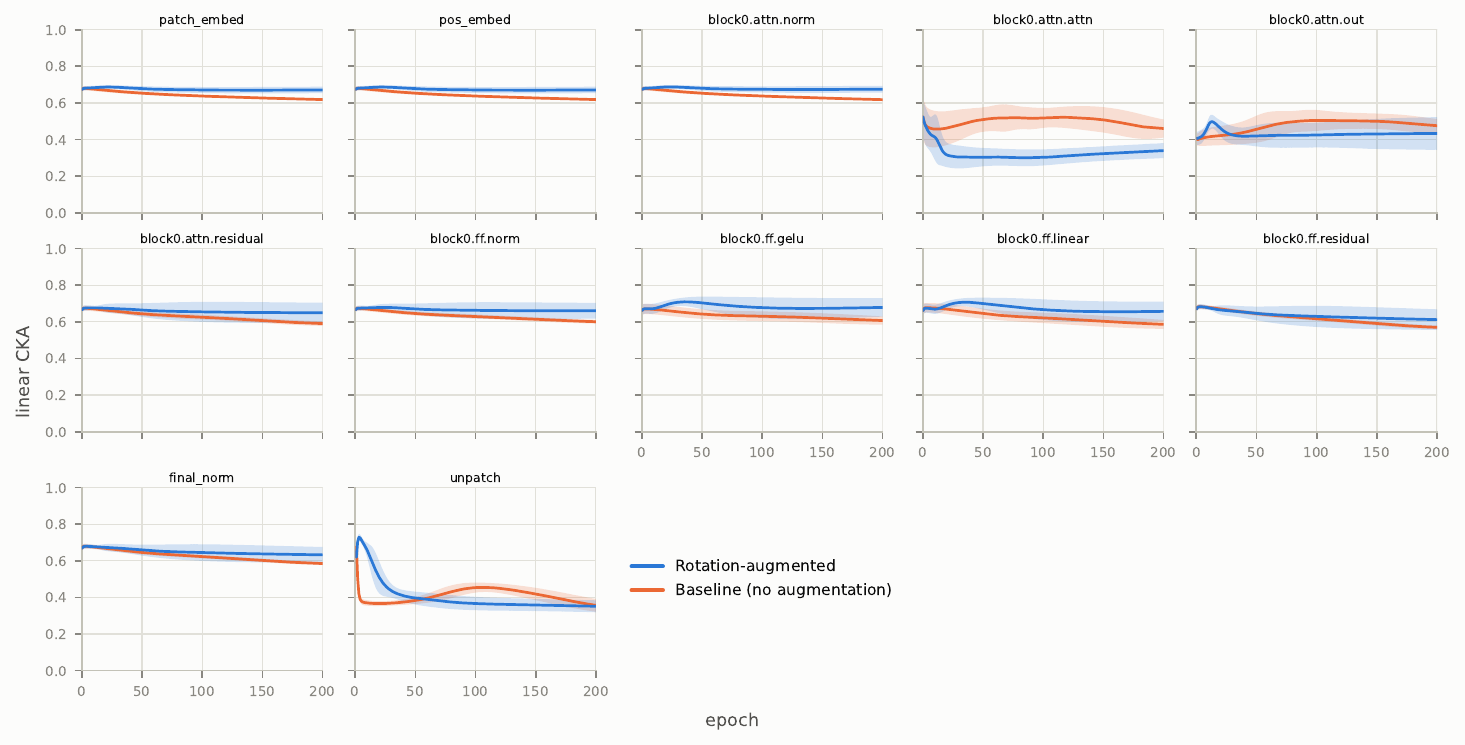}
    \caption{Full view of a ViT learning fluid dynamics on a grid-like buoyancy dataset.}
    \label{fig:fluid_flow_vit}
\end{figure}

\begin{figure}[h]
    \centering
    \includegraphics[width=\linewidth]{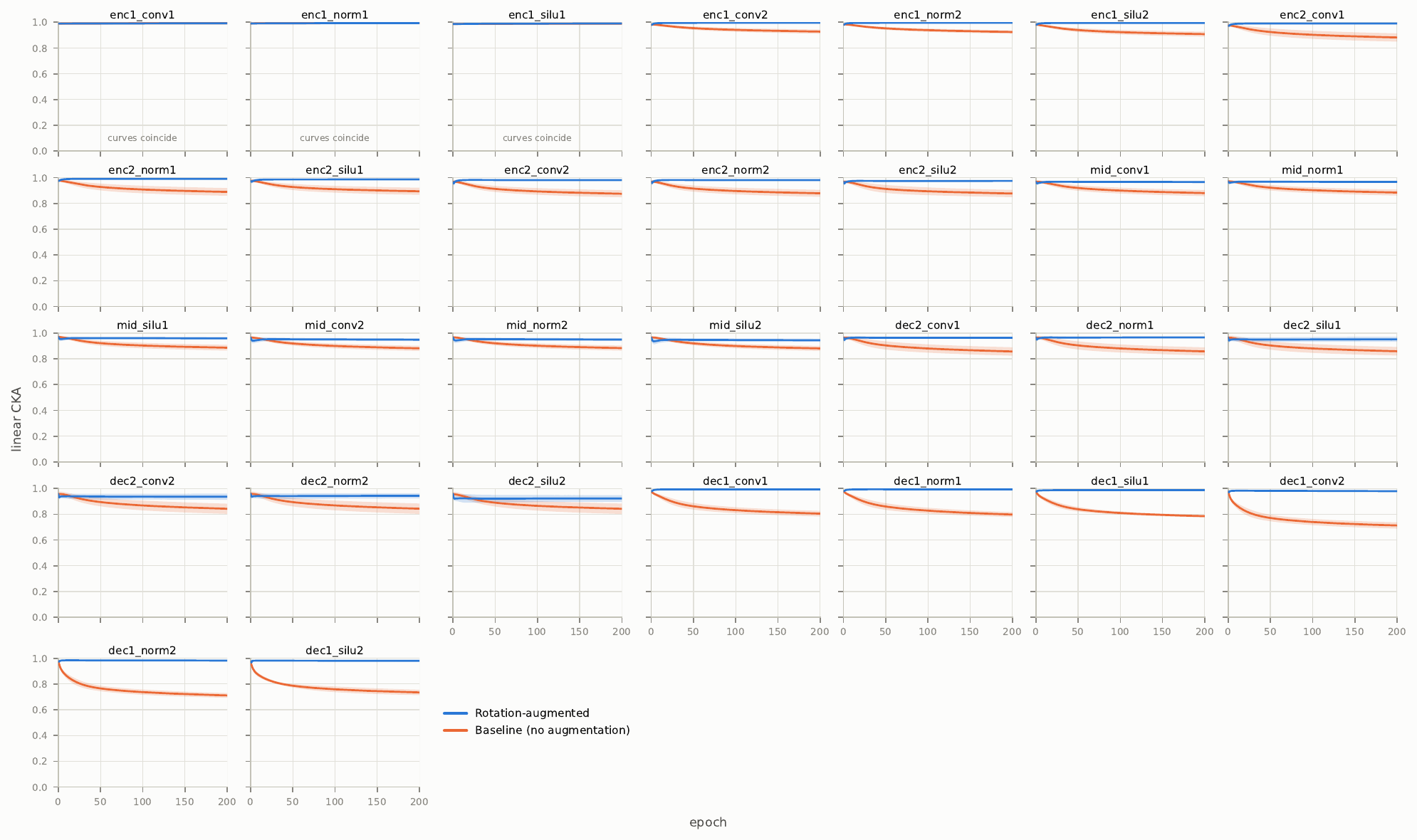}
    \caption{Full view of a UNet learning fluid dynamics on a grid-like buoyancy dataset.}
    \label{fig:fluid_flow_unet}
\end{figure}

\begin{figure}[h]
    \centering
    \includegraphics[width=\linewidth]{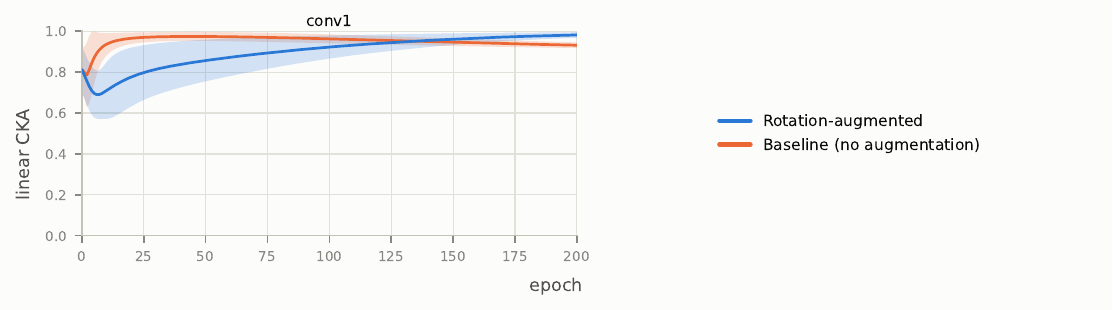}
    \caption{Full view of a perceptron learning fluid dynamics on a grid-like buoyancy dataset.}
    \label{fig:fluid_flow_naive}
\end{figure}

\begin{figure}[h]
    \centering
    \includegraphics[width=\linewidth]{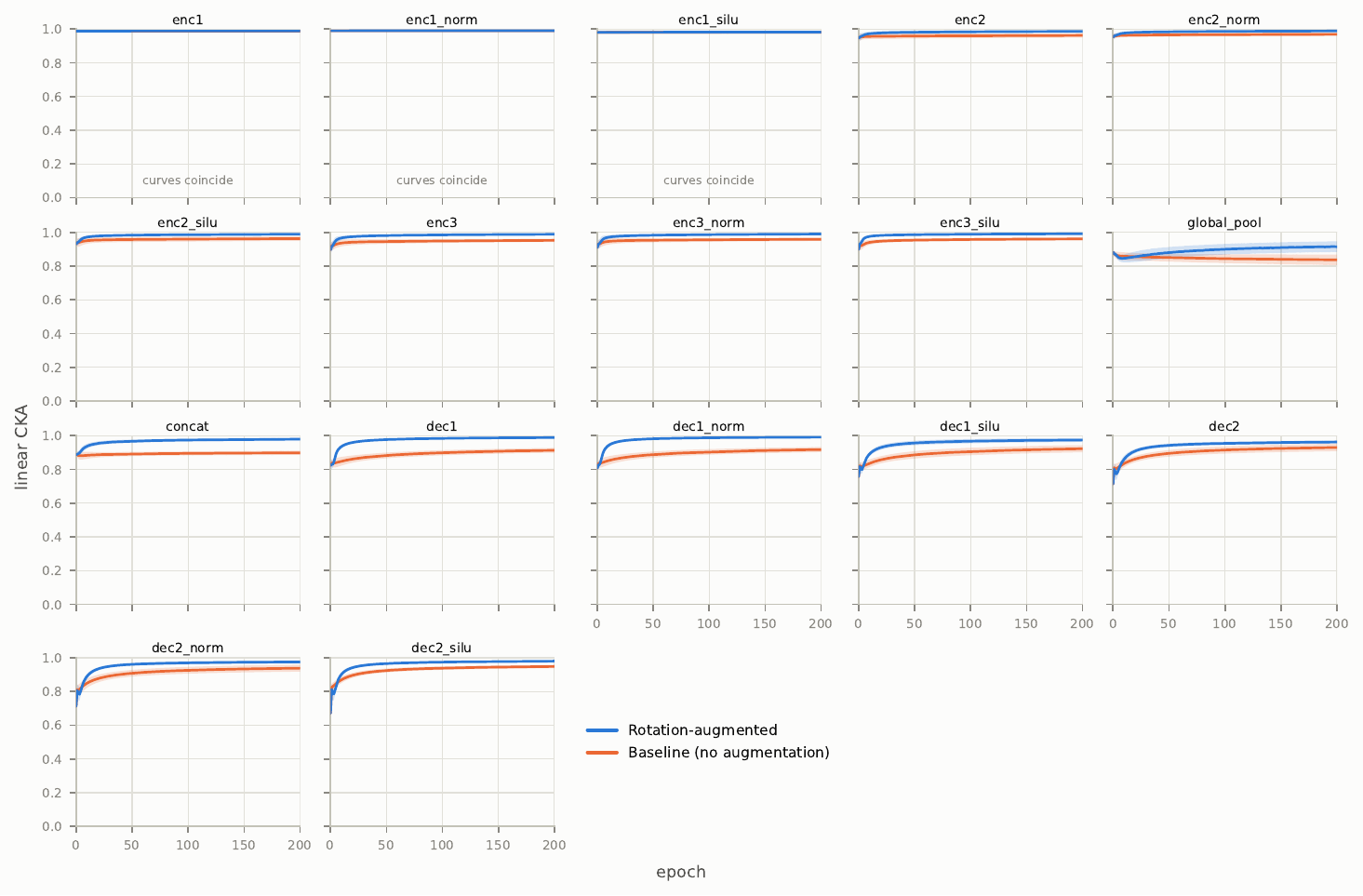}
    \caption{Full view of a PointNet learning fluid dynamics on a particle-like buoyancy dataset.}
    \label{fig:fluid_flow_particles_pointnet}
\end{figure}

\begin{figure}[h]
    \centering
    \includegraphics[width=\linewidth]{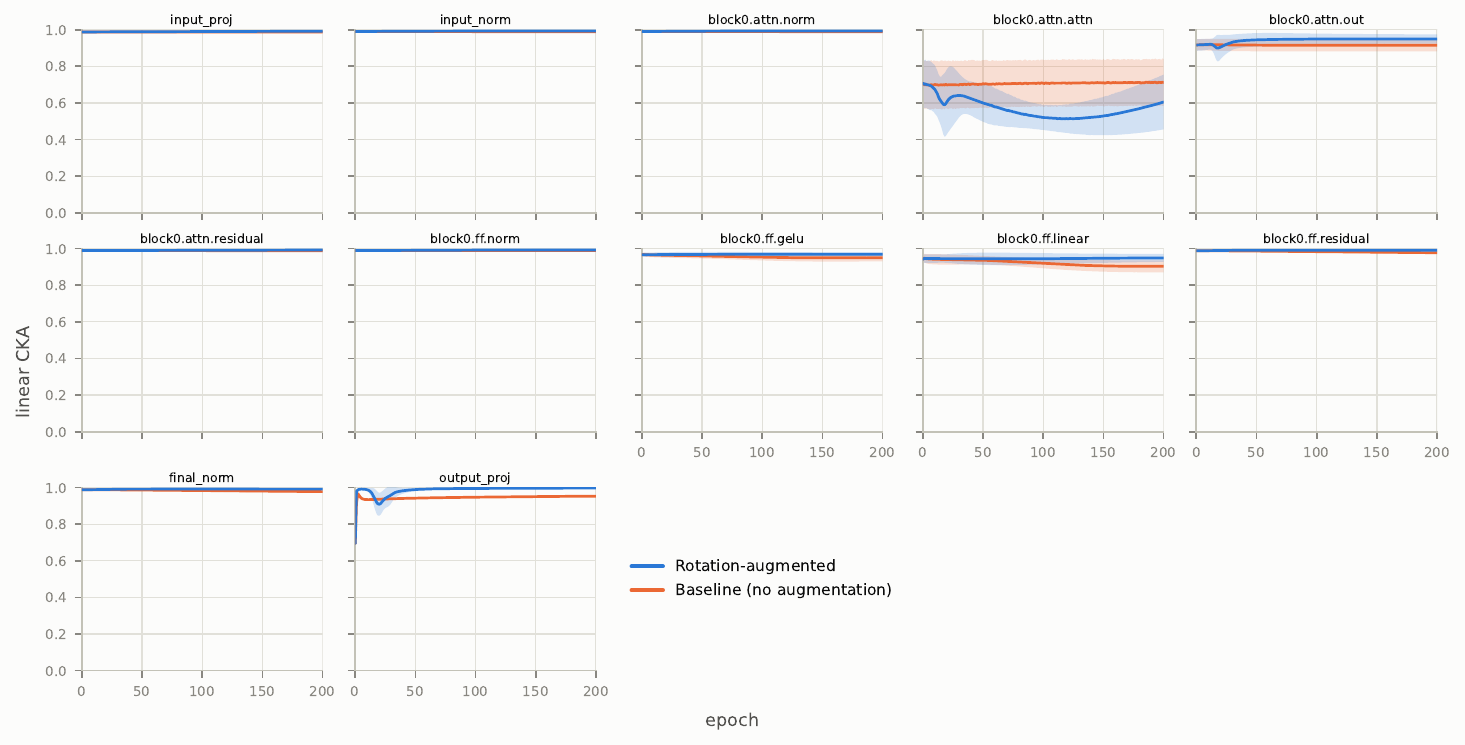}
    \caption{Full view of a SetTransformer learning fluid dynamics on a particle-like buoyancy dataset.}
    \label{fig:fluid_flow_particles_settransformer}
\end{figure}

\begin{figure}[h]
    \centering
    \includegraphics[width=\linewidth]{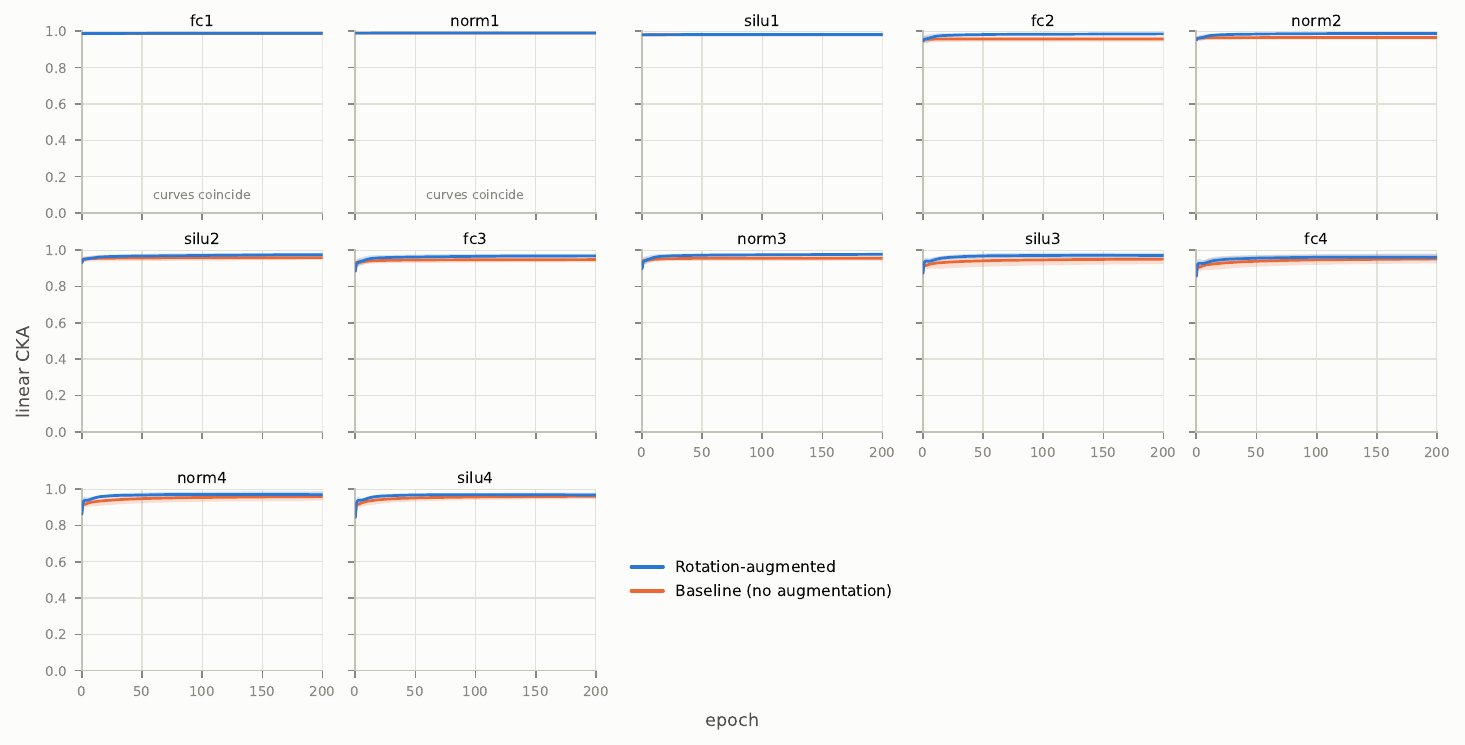}
    \caption{Full view of an MLP learning fluid dynamics on a particle-like buoyancy dataset.}
    \label{fig:fluid_flow_particles_mlp}
\end{figure}

\begin{figure}[h]
    \centering
    \includegraphics[width=\linewidth]{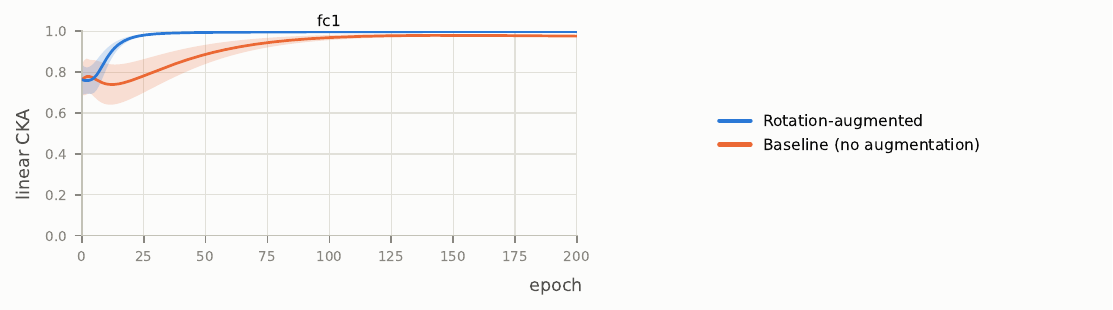}
    \caption{Full view of a perceptron learning fluid dynamics on a particle-like buoyancy dataset.}
    \label{fig:fluid_flow_particles_naive}
\end{figure}

\begin{figure}[h]
    \centering
    \includegraphics[width=\linewidth]{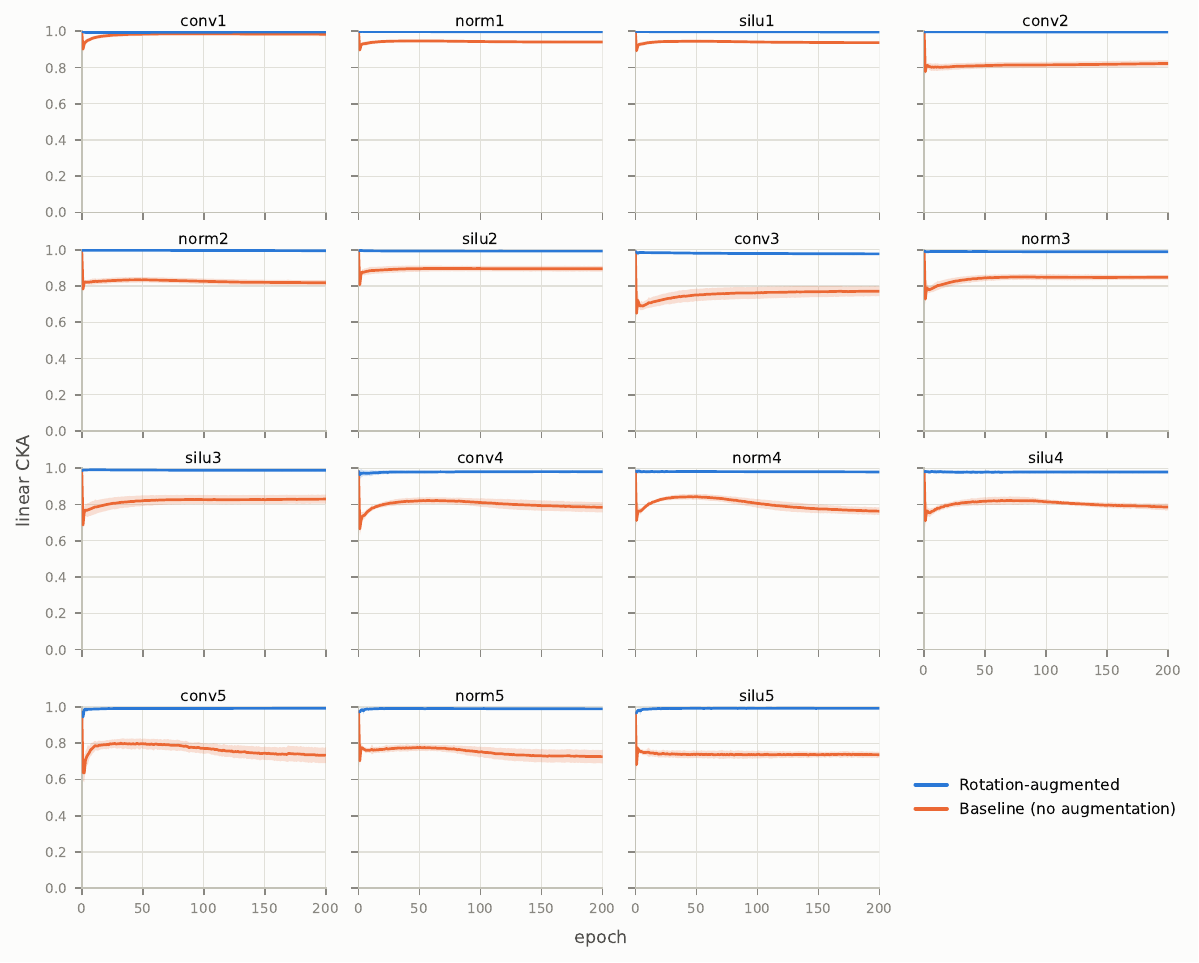}
    \caption{Full view of a CNN learning stress tensor prediction on a grid-like anisotropic dataset.}
    \label{fig:stress_prediction_cnn}
\end{figure}

\begin{figure}[h]
    \centering
    \includegraphics[width=\linewidth]{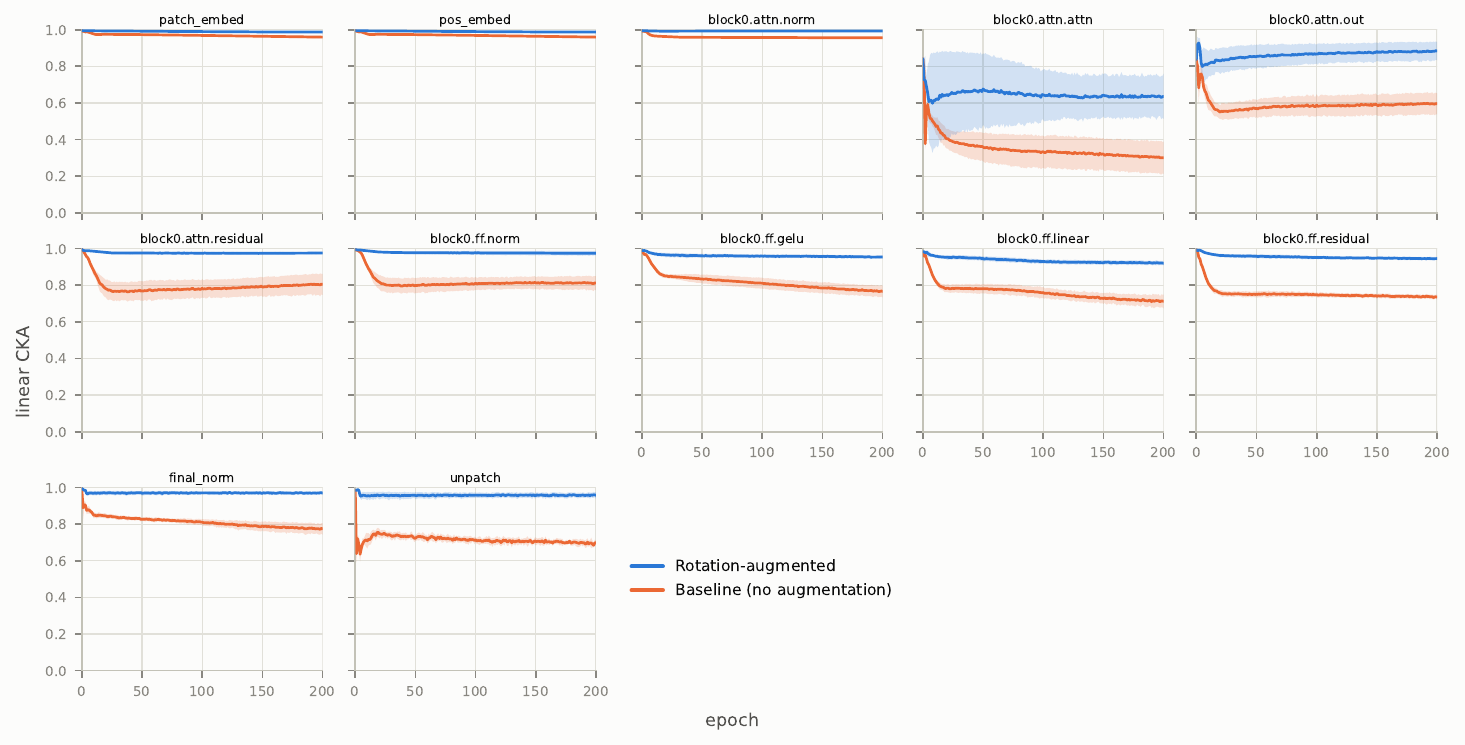}
    \caption{Full view of a ViT learning stress tensor prediction on a grid-like anisotropic dataset.}
    \label{fig:stress_prediction_vit}
\end{figure}

\begin{figure}[h]
    \centering
    \includegraphics[width=\linewidth]{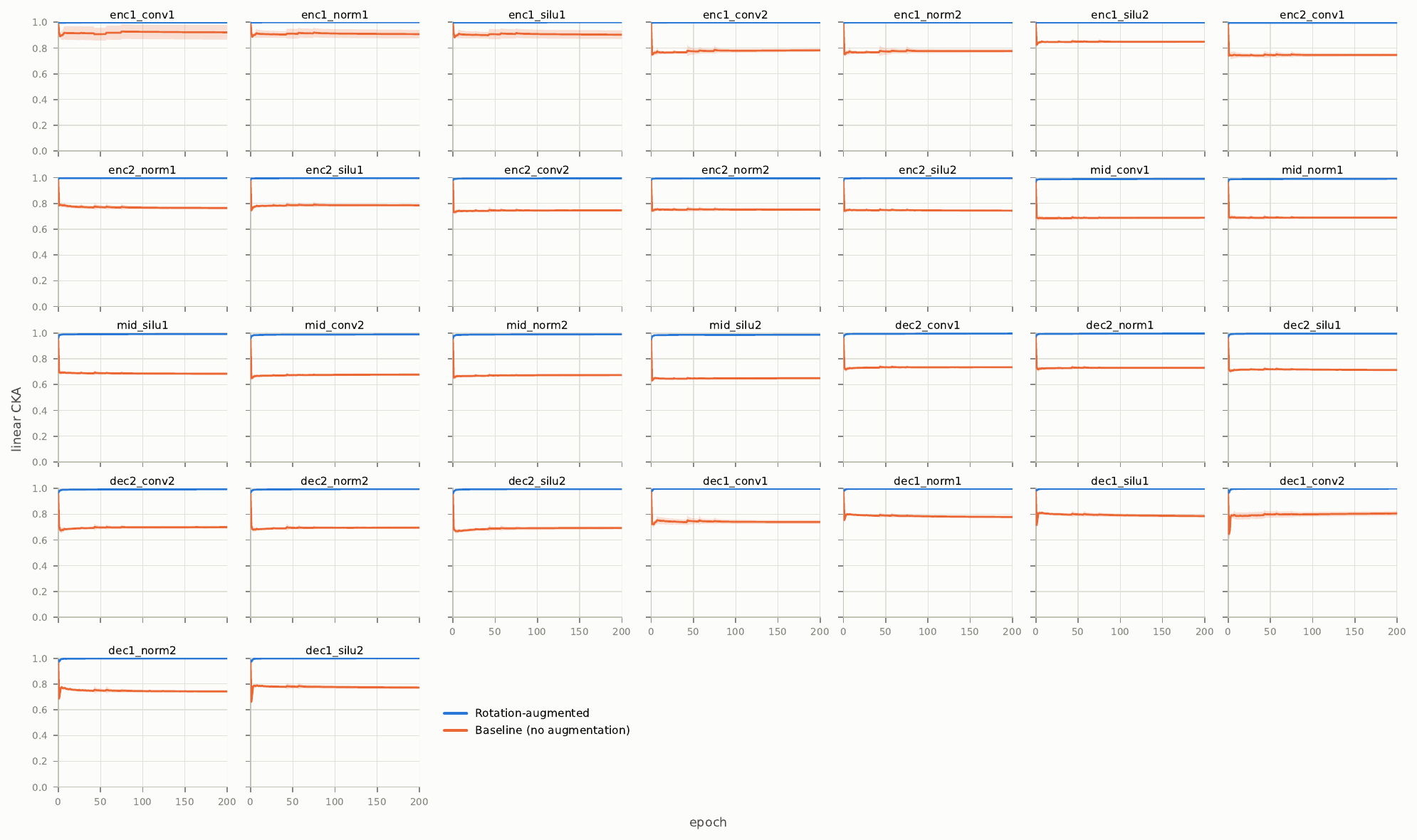}
    \caption{Full view of a UNet learning stress tensor prediction on a grid-like anisotropic dataset.}
    \label{fig:stress_prediction_unet}
\end{figure}

\begin{figure}[h]
    \centering
    \includegraphics[width=\linewidth]{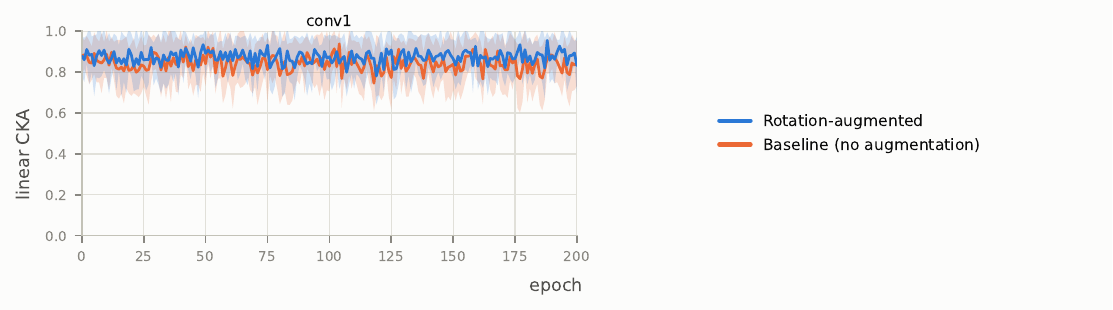}
    \caption{Full view of a perceptron learning stress tensor prediction on a grid-like anisotropic dataset.}
    \label{fig:stress_prediction_naive}
\end{figure}

\begin{figure}[h]
    \centering
    \includegraphics[width=\linewidth]{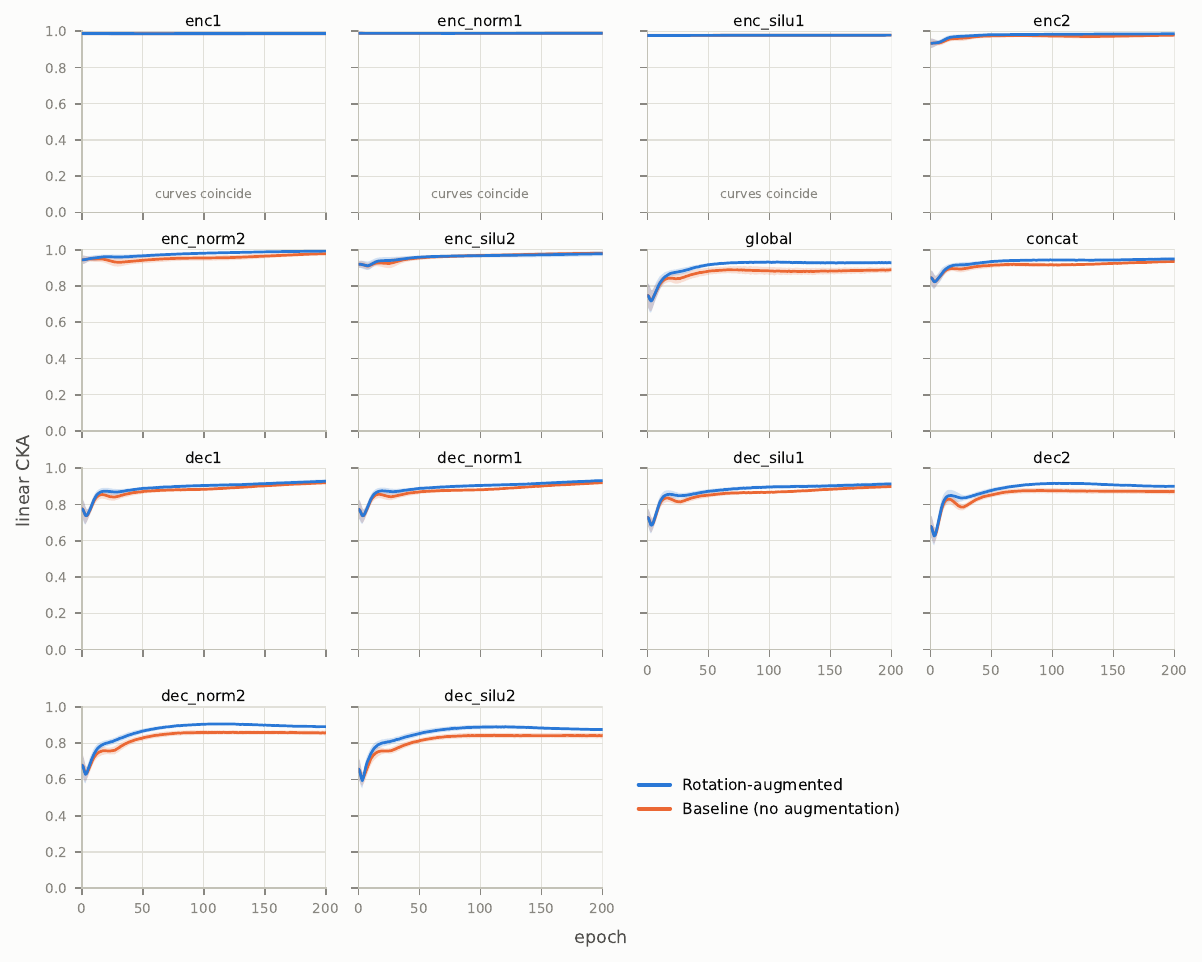}
    \caption{Full view of a PointNet learning stress tensor prediction on a particle-like anisotropic dataset.}
    \label{fig:stress_prediction_particles_pointnet}
\end{figure}

\begin{figure}[h]
    \centering
    \includegraphics[width=\linewidth]{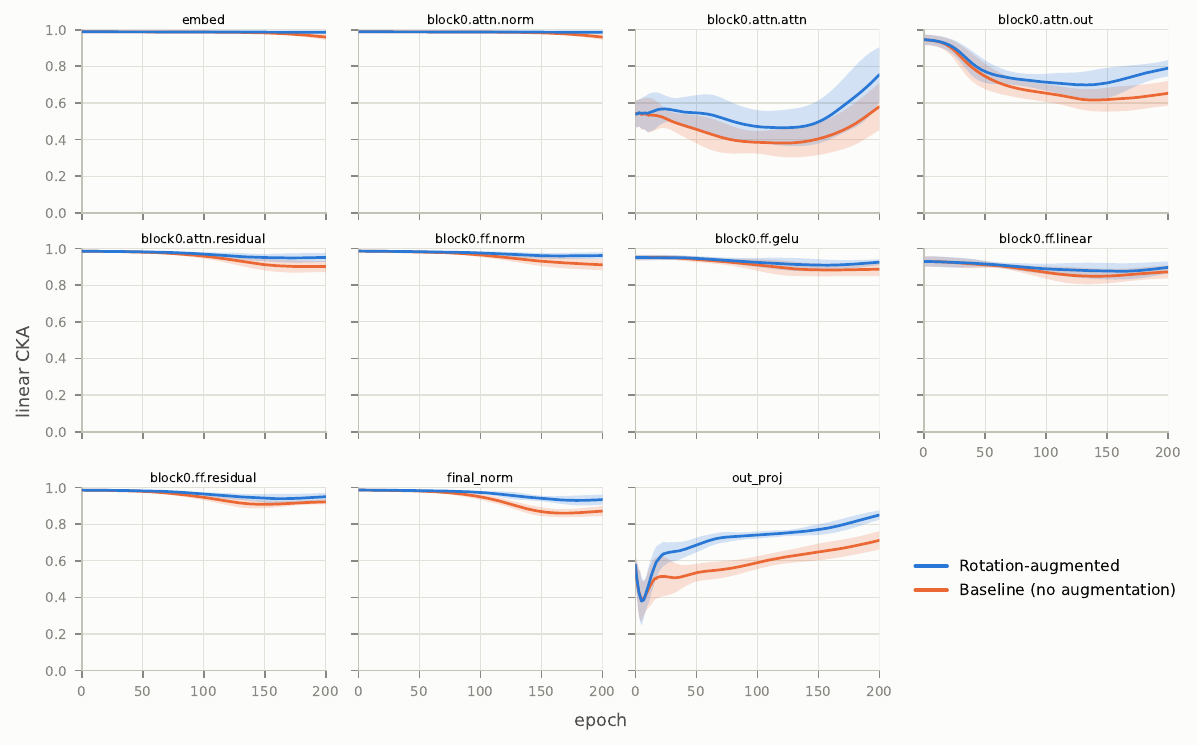}
    \caption{Full view of a SetTransformer learning stress tensor prediction on a particle-like anisotropic dataset.}
    \label{fig:stress_prediction_particles_settransformer}
\end{figure}

\begin{figure}[h]
    \centering
    \includegraphics[width=\linewidth]{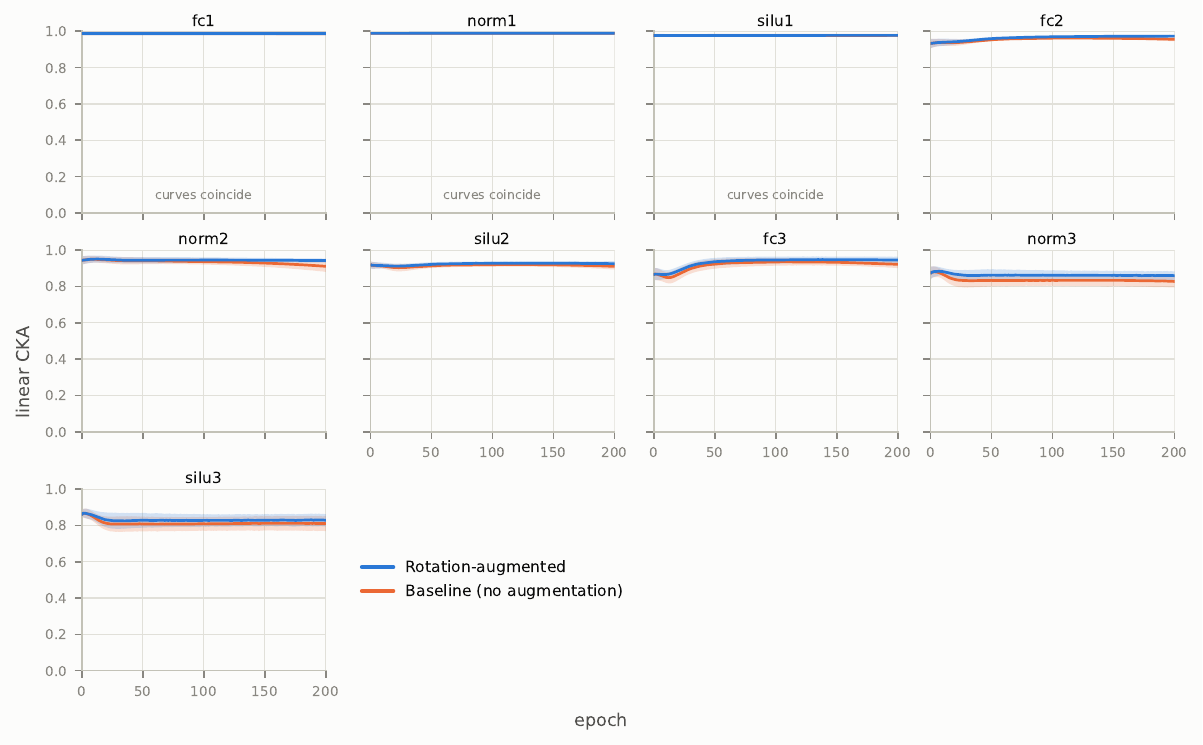}
    \caption{Full view of an MLP learning stress tensor prediction on a particle-like anisotropic dataset.}
    \label{fig:stress_prediction_particles_mlp}
\end{figure}

\begin{figure}[h]
    \centering
    \includegraphics[width=\linewidth]{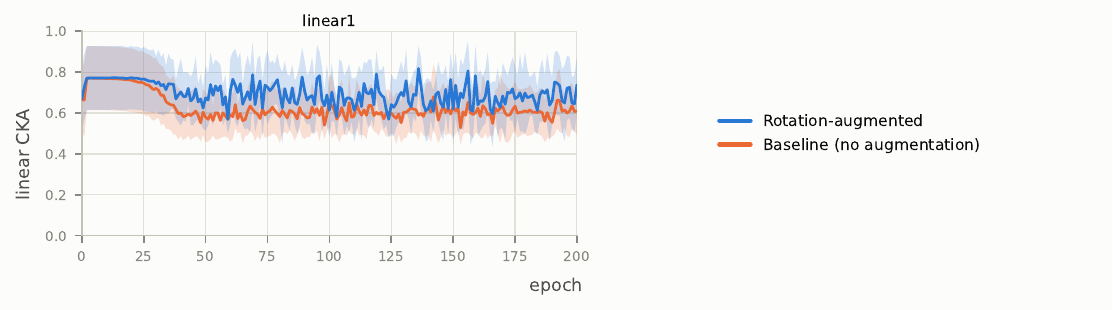}
    \caption{Full view of a perceptron learning stress tensor prediction on a particle-like anisotropic dataset.}
    \label{fig:stress_prediction_particles_naive}
\end{figure}

\end{document}

%% file: math_commands.tex
\usepackage{amsmath,amsfonts,bm}

\def\eqref#1{equation~\ref{#1}}
\def\1{\bm{1}}

\DeclareMathAlphabet{\mathsfit}{\encodingdefault}{\sfdefault}{m}{sl}
\SetMathAlphabet{\mathsfit}{bold}{\encodingdefault}{\sfdefault}{bx}{n}